\DeclareFontFamily{OT1}{pzc}{}
\DeclareFontShape{OT1}{pzc}{m}{it}{<-> s * [1.10] pzcmi7t}{}
\DeclareMathAlphabet{\mathpzc}{OT1}{pzc}{m}{it}  
\newcommand{\diag}[1]{\text{diag}\big\{#1\big\}}
\newcommand{\col}[1]{\text{col}\left\{#1\right\}}
\newcommand{\row}[1]{\text{row}\left\{#1\right\}}
\newcommand{\grad}[1]{\nabla_{#1^{\tran}} }
\newcommand{\ws}{\bm{{\scriptstyle\mathcal{W}}}}
\newcommand{\wse}{\widetilde{\ws}}
\newcommand{\w}{\bm{w}}
\newcommand{\x}{\bm{x}}
\newcommand{\we}{\widetilde{\w}}
\newcommand{\eqdef}{\:\overset{\Delta}{=}\:}
\DeclareMathOperator*{\argmin}{argmin}
\newcommand{\tran}{{\sf T}}
\newcommand{\A}{\bm{\mathcal{A}}_{iT+t}}
\newcommand{\M}{\bm{\mathcal{M}}_{i}}
\newcommand{\Hi}{\bm{\mathcal{H}}_{iT+t-1}}
\newcommand{\F}{\mathcal{F}_{iT+t-1}}
\newcommand{\Abar}{\bar{\mathcal{A}}}
\newcommand{\swb}{{\boldsymbol{\scriptstyle{\mathcal{W}}}}}
\newcommand{\sxb}{{\boldsymbol{\scriptstyle{\mathcal{X}}}}}
\newtheorem{theorem}{Theorem}
\newtheorem{assumption}{Assumption}
\newtheorem{lemma}{Lemma}
\newtheorem{remark}{Remark}
\definecolor{Gray}{gray}{0.8}
\definecolor{LightCyan}{rgb}{0.88,1,1}
\def\endthebibliography{%
	\def\@noitemerr{\@latex@warning{Empty `thebibliography' environment}}%
	\endlist
}
\begin{document}
%
\title{Diffusion Learning with Partial Agent Participation and Local Updates}
%
%
%

\author{Elsa~Rizk\IEEEauthorrefmark{1},~\IEEEmembership{}
        Kun~Yuan\IEEEauthorrefmark{2},~\IEEEmembership{}
        and~Ali~H. Sayed\IEEEauthorrefmark{1}~\IEEEmembership{}
\thanks{\IEEEauthorrefmark{1}The authors are with the School of Engineering, École Polytechnique Fédérale de Lausanne, Lausanne, Switzerland (e-mail: elsa.rizk@alumni.epfl.ch; ali.sayed@epfl.ch).
	\IEEEauthorrefmark{2}The author is with the Center for Machine Learning Research, Peking University, Beijing, China (e-mail: kunyuan@pku.edu.cn).
 A short conference version appears in \cite{rizk2024asynchronous}. \textit{(Corresponding author: Ali H. Sayed.)}
}
}

\maketitle

\begin{abstract}
Diffusion learning is a framework that endows edge devices with advanced intelligence. By processing and analyzing data locally and allowing each agent to communicate with its immediate neighbors, diffusion effectively protects the privacy of edge devices, enables real-time response, and reduces reliance on central servers. However, traditional diffusion learning relies on communication at every iteration, leading to communication overhead, especially with large learning models. 
Furthermore, the inherent volatility of edge devices, stemming from power outages or signal loss, poses challenges to reliable communication between neighboring agents. To mitigate these issues, this paper investigates an enhanced diffusion learning approach incorporating local updates and partial agent participation. Local updates will curtail communication frequency, while partial agent participation will allow for the inclusion of agents based on their availability. We prove that the resulting algorithm is stable in the mean-square error sense and provide a tight analysis of its  Mean-Square-Deviation (MSD) performance. Various numerical experiments are conducted to illustrate our theoretical findings.
\end{abstract}

\begin{IEEEkeywords}
distributed learning, federated learning, diffusion algorithm, local updates, partial agent participation
\end{IEEEkeywords}

%
\IEEEpeerreviewmaketitle

\section{Introduction}

\IEEEPARstart{E}{dge} intelligence refers to an emerging paradigm that brings data processing and analysis capabilities near data generation sources, rather than transmitting data to centralized cloud servers for remote processing. This approach enables faster response, enhanced data privacy, reduced bandwidth usage, and improved reliability, as the data is processed locally on edge devices such as IoT devices, smartphones, wearables, and edge servers. Edge intelligence finds applications in various domains, such as smart cities~\cite{khan2020edge}, industrial automation~\cite{dai2019industrial}, autonomous vehicles~\cite{zhang2019mobile}, healthcare~\cite{hayyolalam2021edge}, and surveillance systems~\cite{rajavel2022iot}, where real-time decision-making or immediate insights and actions are crucial.

There are several challenges to fully utilizing local data collected by edge agents to enhance their intelligence. First, the data collected by individual edge agents is often limited in size, making it difficult for a single agent to train an effective machine learning model based on its local data alone. Second, this local data is usually sensitive and private, and edge agents are typically reluctant to transmit it to a central server for large-scale model training. Decentralized learning solves these challenges, enabling all agents to collaborate on training a global model without sharing their private local data with a remote server.

In decentralized learning, edge agents are connected via a network topology and communicate only with their immediate neighbors. Two popular and fundamental schemes in decentralized learning are: consensus \cite{degroot1974reaching,nedic2009distributed,boyd2006randomized,kar2008distributed} and diffusion \cite{chen2012diffusion,tu2012diffusion,chen2015learning,sayed2014adaptive,sayed2014adaptation}. One key distinction between the two classes of strategies is that the consensus updates are asymmetrical. In this approach, the starting point for the gradient-descent step differs from the location where the gradient vector is evaluated. Previous research (see, e.g., \cite{tu2012diffusion,sayed2014adaptation,sayed2014adaptive}) has demonstrated that this asymmetry reduces the stability range of consensus implementations compared to diffusion solutions, particularly in scenarios requiring continuous learning and adaptation. Recent efforts have extended consensus and diffusion to scenarios with data heterogeneity \cite{shi2015extra,yuan2018exact,yuan2018exact2,xu2015augmented,nedic2017achieving,di2016next,tang2018d}, non-smooth regularizers \cite{shi2015proximal,alghunaim2019linearly,xu2021distributed}, or directed network topologies \cite{nedic2014distributed,xin2018linear,pu2020push}.

While the aforementioned algorithms effectively enable agents to collaboratively train machine learning models, they necessitate each agent to transmit high-dimensional optimization variables to neighbors at every iteration. Given the substantial size of these high-dimensional variables, such frequent communication incurs significant overhead, thereby hindering algorithmic efficiency and scalability. To mitigate this issue, several recent works introduce the local update technique, which is widely employed in federated learning algorithms such as FedAvg~\cite{konevcny2016federated} and Scaffold~\cite{karimireddy2020scaffold}, to fully decentralized algorithms. By allowing agents to perform multiple local updates before exchanging information with neighbors, the local update strategy significantly reduces communication frequency and consequently alleviates the associated overhead in decentralized learning frameworks \cite{liu2024decentralized,nguyen2023performance,ge2023gradient,alghunaim2024local}. 

Despite the effectiveness of local updates in reducing communication costs, existing works \cite{liu2024decentralized, nguyen2023performance, ge2023gradient, alghunaim2024local} necessitate that all agents be available for each iteration and participate in the entire learning procedure. This prerequisite is restrictive, as device agents are typically fragile and can become unavailable due to power outages or signal loss. Consequently, given the inevitability of intermittent unavailability, one cannot expect them to participate in every iteration. The issue of partial agent participation has been studied in the context of federated learning \cite{karimireddy2020scaffold, chen2022optimal, rizk2022federated}. However, it remains unclear how to address this issue in fully decentralized settings. 

To address this question, this paper proposes the first decentralized learning algorithm that enables both local updates and partial agent participation. During each iteration, a subset of agents participates in the training process. Upon activation, these agents perform a sequence of local stochastic gradient updates and then communicate with their active neighbors. By iteratively repeating this procedure, the proposed algorithm achieves both communication efficiency and robustness to partial agent participation, providing a more efficient and resilient solution for decentralized learning with edge agents prone to intermittent unavailability. Additionally, the proposed algorithm is highly versatile, encompassing FedAvg with partial agent participation by specifying certain network topologies and activation strategies. In summary, this paper makes the following contributions: 
\begin{itemize}[leftmargin = 2em]
    \item We propose the first decentralized learning algorithm that supports both local updates and partial agent participation. This algorithm reduces communication overhead and enhances robustness to agent unavailability.

    \vspace{1mm}
    \item We justify our algorithm's versatility by showing that it encompasses various existing algorithms through specific network topologies and activation strategies.

    \vspace{1mm}
    \item We demonstrate that the resulting algorithm is stable in the mean-square error sense. We derive a  closed-form mean-square deviation (MSD) expression, which is directly applicable to federated learning algorithms. Prior to our work, the existing literature could only establish the order of the steady-state performance for these algorithms, rather than provide precise expressions.

    \vspace{1mm}
    \item We conduct extensive experiments to illustrate the theoretical findings. 
\end{itemize}

\noindent \textbf{Notation.} We use $\mathrm{col}\{x_1,\cdots, x_K\}$ to denote a column vector by stacking $x_1,\cdots, x_K$, and $\mathrm{row}\{x_1,\cdots, x_K\}$ to denote a row vector. We use $\mathds{1}_K \eqdef \mathrm{col}\{1,\cdots, 1\}\in \mathbb{R}^K$  and let 
$\nabla_{w^\tran} J(w)\eqdef \mathrm{col}\{\partial J(w)/\partial w_1, \cdots, \partial J(w)/\partial w_K\}\in \mathbb{R}^K$. We write $I_K \in \mathbb{R}^{K\times K}$ to denote the identity matrix. The Kronecker product is denoted by $\otimes$.



\section{Problem Setup and Diffusion Learning}
We consider a network of \( K \) agents collaborating to solve the following problem in a fully distributed manner:
\begin{align}\label{eq:optProb}
  \min_{w\in\mathbb{R}^M} \frac{1}{K}\sum_{k=1}^K\left\{J_k(w)  \eqdef \frac{1}{N_k}\sum_{n=1}^{N_k} Q_k(w;x_{k,n})\right\},
\end{align}
where each agent \( k \) maintains a local risk \( J_k(w) \), defined as the empirical average of the local loss functions \( Q_k(w; x_{k,n}) \) over its local dataset \(\{ x_{k,n} \}_{n=1}^{N_k} \). The quantity \( N_k \) indicates the size of the local dataset. The underlying network is associated with a combination matrix \( A = [a_{\ell k}] \in \mathbb{R}^{K \times K} \), where the weight \( a_{\ell k} \) scales the information sent from agent \(\ell\) to agent \( k \). This paper makes the following standard assumptions.

\begin{assumption}[\sc{Combination matrix}]\label{assum:combMat}
    The combination matrix $A$ is symmetric and left-stochastic, namely:
    \begin{align} 
    a_{\ell k} = a_{k\ell} \geq 0, \quad \quad  A^\tran \mathds{1}_K = \mathds{1}_K.
\end{align}
Additionally, we assume that $A$ is primitive; that is, there exists an integer $m$ such that all entries of $A^m$ are strictly positive.
\qed
\end{assumption}
A sufficient condition for $A$ to be primitive is that $A$ is strongly connected, namely, there exists a path of nonzero weights linking any two agents $k$ and $\ell$ (the path in both directions need not be identical) and, moreover, there exists at least one agent $k^o$ with a nontrivial self-loop, i.e., $a_{k^o, k^o} >0$.
Additionally, Assumption \ref{assum:combMat} implies that $A$ is doubly stochastic, which in turn implies that its Perron eigenvector is $p = (1/K)\mathds{1}_K$.

\begin{assumption}[\sc{Risk and loss functions}]\label{assum:fct}
    The empirical risks $J_k(w)$
are $\nu-$strongly convex for some $\nu > 0$, and the loss functions $Q_k(w; x_{k,n})$ are convex and twice differentiable over $w$, namely, it holds for any $w_1$ and $w_2$ that:
    \begin{align}
        &J_k (w_2)\hspace{-0.7mm}\geq \hspace{-0.7mm} J_k(w_1) \hspace{-0.7mm}+\hspace{-0.7mm} \grad{w}J_k(w_1)(w_2\hspace{-0.7mm}-\hspace{-0.7mm}w_1) \hspace{-0.7mm}+\hspace{-0.7mm} \frac{\nu}{2} \Vert w_2\hspace{-0.7mm}-\hspace{-0.7mm}w_1\Vert^2\hspace{-0.7mm}, \\
       & Q_k(w_2;x_{k,n})\hspace{-1mm}  \geq \hspace{-1mm} Q_k(\hspace{-0.4mm}w_1;x_{k,n}\hspace{-0.4mm}) 
        \hspace{-0.8mm}+\hspace{-0.8mm} \grad{w}Q_k(\hspace{-0.4mm}w_1\hspace{-0.4mm};\hspace{-0.4mm}x_{k,n}\hspace{-0.2mm})\hspace{-0.2mm}(\hspace{-0.4mm}w_2\hspace{-1mm}-\hspace{-1mm}w_1\hspace{-0.4mm}).
    \end{align}
    We further assume that the loss functions have $\delta-$Lipschitz continuous gradients for any $x_{k,n}$:
    \begin{align}
        \Vert \nabla_{w^\tran}Q_k(w_2;x_{k,n}) \hspace{-0.3mm}-\hspace{-0.3mm} \nabla_{w^\tran}Q_k(w_1;x_{k,n})\Vert \hspace{-0.3mm}\leq\hspace{-0.3mm} \delta \Vert w_2 \hspace{-0.5mm}-\hspace{-0.5mm} w_1\Vert .
    \end{align}
    \qed
\end{assumption}
\noindent Assumption \ref{assum:fct} implies that the Hessian matrix is bounded from below and above:
\begin{align}
    \nu I \leq \nabla_{w}^2J_{k}(w) \leq \delta I.
\end{align}

\begin{assumption}[\sc{Smooth Hessians}]\label{assum:hess}
    The Hessians are locally Lipschitz in a small neighborhood around the optimal model $w^o$ (which is defined in \eqref{eq:optMod} of Section~\ref{sec:drift}), namely, there exists $\kappa > 0$ for small $\Delta w$:
    \begin{align}
        \Vert \nabla_{w}^2J_k(w^o + \Delta w) -\nabla_{w}^2J_k(w^o) \Vert \leq \kappa\Vert \Delta w \Vert . 
    \end{align}
    \qed
\end{assumption}
To solve problem \eqref{eq:optProb}, we employ the well-known diffusion learning algorithm \cite{chen2012diffusion,tu2012diffusion,chen2015learning,sayed2014adaptive,sayed2014adaptation}, where each agent \( k \) executes the following recursions in a distributed manner:
\begin{align}
\bm{\psi}_{k,i} &= \w_{k,i-1} - \mu \nabla_{w^\tran} Q(\w_{k,i-1}; \x_{k,i}), \label{diff-1}\\
\w_{k,i} &= \sum_{\ell \in \mathcal{N}_k} a_{\ell k} \bm{\psi}_{\ell,i}. \label{diff-2}
\end{align}
Here, the index \( i \) denotes the iteration, \( \mathcal{N}_k \) represents the set of neighbors of agent \( k \) (including \( k \) itself), and the random variable \( \x_{k,i} \) is sampled uniformly from the dataset \( \{x_{k,1}, \ldots, x_{k,N_k}\} \). Throughout this paper, we utilize a constant step-size $\mu$ to facilitate continuous adaptation and learning in response to potential drifts of the global minimizer, which may occur due to changes in the statistical properties of the data. If we  collect the quantities from across the network into block vectors, namely, 
\begin{align}
    \swb_i &\eqdef \col{\w_{1,i}, \cdots, \w_{K,i}} \in \mathbb{R}^{KM} \\
    \sxb_i &\eqdef \col{\x_{1,i}, \cdots, \x_{K,i}} \in \mathbb{R}^{KM} \\
    \nabla \mathcal{Q}(\swb_{i-1};\sxb_i)&\eqdef \text{col}\{\nabla_{w^\tran} Q_1(\w_{1,i-1}; \x_{1,i}),\cdots, \nonumber \\
    &\quad \quad  \nabla_{w^\tran} Q_K(\w_{K,i-1}; \x_{K,i})\} \hspace{-0.5mm} \in \mathbb{R}^{KM} \\
    \nabla \mathcal{J}(\swb_{i-1})&\eqdef \text{col}\{\nabla_{w^\tran} J_1(\w_{1,i-1}),\cdots, \nonumber \\
    &\quad \quad \quad \ \nabla_{w^\tran} J_K(\w_{K,i-1})\} \hspace{-0.5mm} \in \mathbb{R}^{KM} \\
    \mathcal{A} &\eqdef A \otimes I_M,
\end{align}
then the diffusion learning algorithm \eqref{diff-1}--\eqref{diff-2} can be written in a more compact manner as:
\begin{align}\label{diff-compact}
\swb_i = \mathcal{A}\Big(\swb_{i-1} - \mu \nabla \mathcal{Q}(\swb_{i-1};\sxb_i)\Big).
\end{align}

\section{Algorithm development}

\subsection{Diffusion learning with local updates}

Diffusion learning allows agents to collaboratively train machine learning models efficiently; however, when $M$ is large, the implementation would require agents to send high-dimensional variables to their neighbors at {\em every} iteration --- see recursion \eqref{diff-2}. The large dimension of the variables leads to communication overhead, which impacts the efficiency and scalability of the algorithms. To mitigate this issue, we propose incorporating local updates into the diffusion process. This modification allows communication to occur only once every \( T \) local updates, thereby significantly reducing communication frequency. More specifically, within each $i$-th block of indices, the agents perform independent local updates for $t=1,2,\ldots, T-1$ times. At $t=T$, the agents perform one final local update and subsequently combine their neighborhood weights. The next $(i+1)$-th block of indices is then launched and the process repeats. This iterative process alternates between a series of independent local updates and a combination step every \( T \) steps.

To implement this approach, we introduce two sets of indices to the algorithm: index $i$ represents the block iteration, while index $t$ denotes the local update step within the block. To enable local updates, we allow the combination matrix to vary with iterations as follows:
\begin{align}\label{A-LU}
\bm{A}_{iT+t} = 
\begin{cases}
I_K, & \mbox{if $t\neq T$,} \\
A, & \mbox{if $t = T$.}
\end{cases}
\end{align}
That is, the combination matrix is set to the identity matrix during the block, and to $A$ at the end of the block. Using this time-varying combination matrix, diffusion learning with local updates can be written as follows: 
\begin{align}\label{eq:diff-compact-local-update}
\swb_{iT+t} &= \A \Big(\swb_{iT+t-1} - \mu \nabla \mathcal{Q}(\swb_{iT+t-1};\sxb_{{iT+t}})\Big), \nonumber \\
&\quad \forall t = 1,2,\cdots, T, \quad \forall i=0,1,2,\cdots.
\end{align}
Here, the subscript $iT+t$ for $t=1,2,\cdots, T$ represents the $t$-th local update during the $i$-th block iteration of the algorithm, and matrix $\A \eqdef \bm{A}_{iT+t} \otimes I_M$. Within every block iteration $i$, all agents separately run local updates during the first $T-1$ time instances. During the last time instance $T$, all agents run one final update followed by a combination step across neighbors --- see Figure~\ref{fig:time-scale} for an illustration of the two time-scale update process. 

\begin{figure}[h!]
    \centering
    \includegraphics[width=0.47\textwidth]{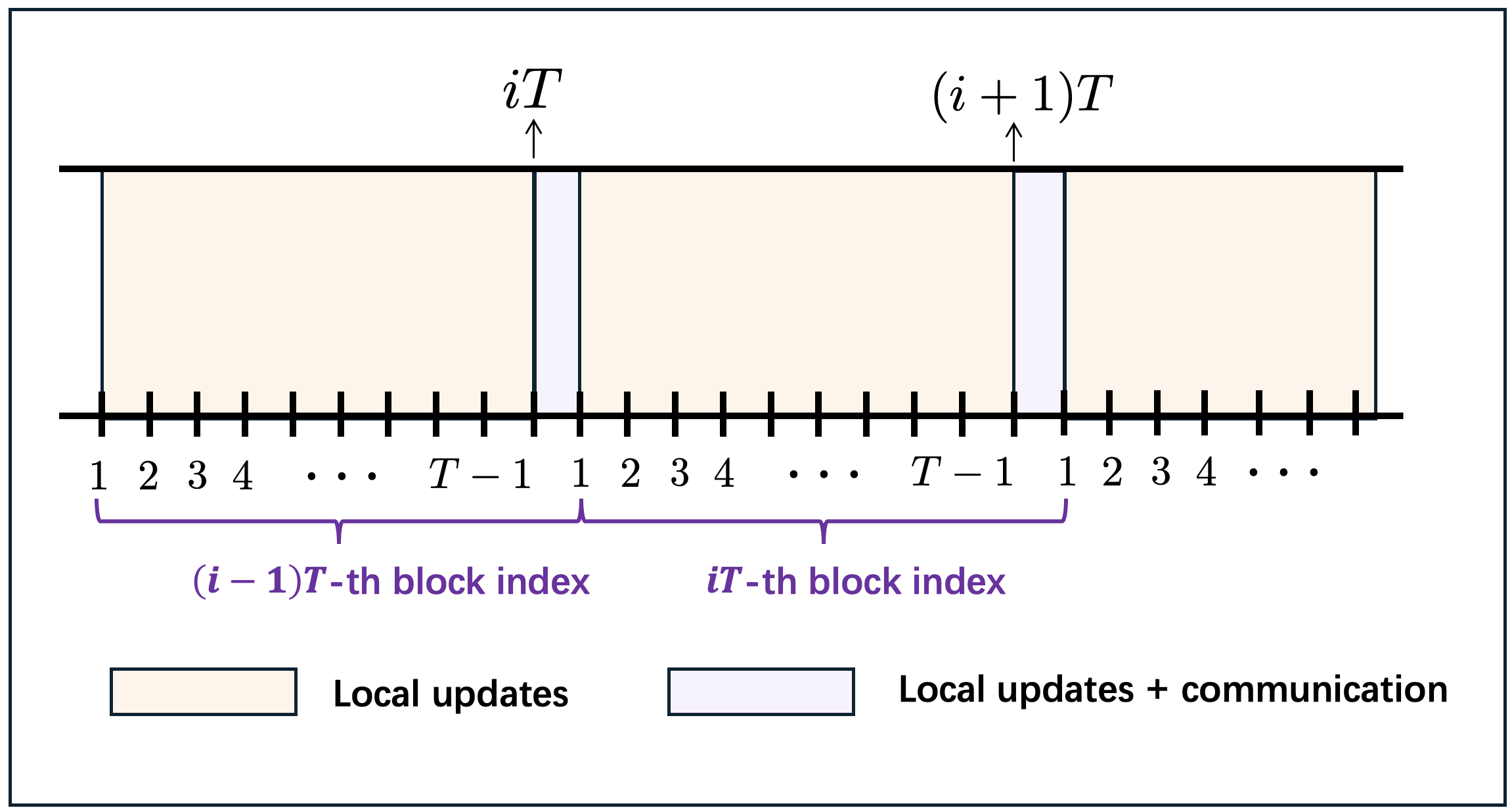}
    \caption{\small Illustration of the time scales for diffusion learning with local updates, i.e., recursion \eqref{eq:diff-compact-local-update}. 
    }
    \label{fig:time-scale}
\end{figure}

\subsection{Enabling partial agent participation}

Despite the efficacy of local updates in reducing communication costs, recursion \eqref{eq:diff-compact-local-update} requires all agents to be available for each iteration and participate throughout the learning process. This requirement is restrictive, as agents are vulnerable to disruptions like power outages or signal loss, making them intermittently unavailable. Consequently, it is unrealistic to expect continuous participation from all agents in every iteration. To address this limitation, we introduce a partial agent participation strategy to recursion \eqref{eq:diff-compact-local-update}.

Again, we model partial agent participation using a time-varying combination matrix. We assume that only a subset of agents is active during the \(i\)-th block iteration. An inactive agent is unable to transmit information to its neighbors. This scenario effectively results in sampling a subset of agents and edges from the original network topology.

\begin{figure*}[t!]
    \centering
\includegraphics[width=1\textwidth]{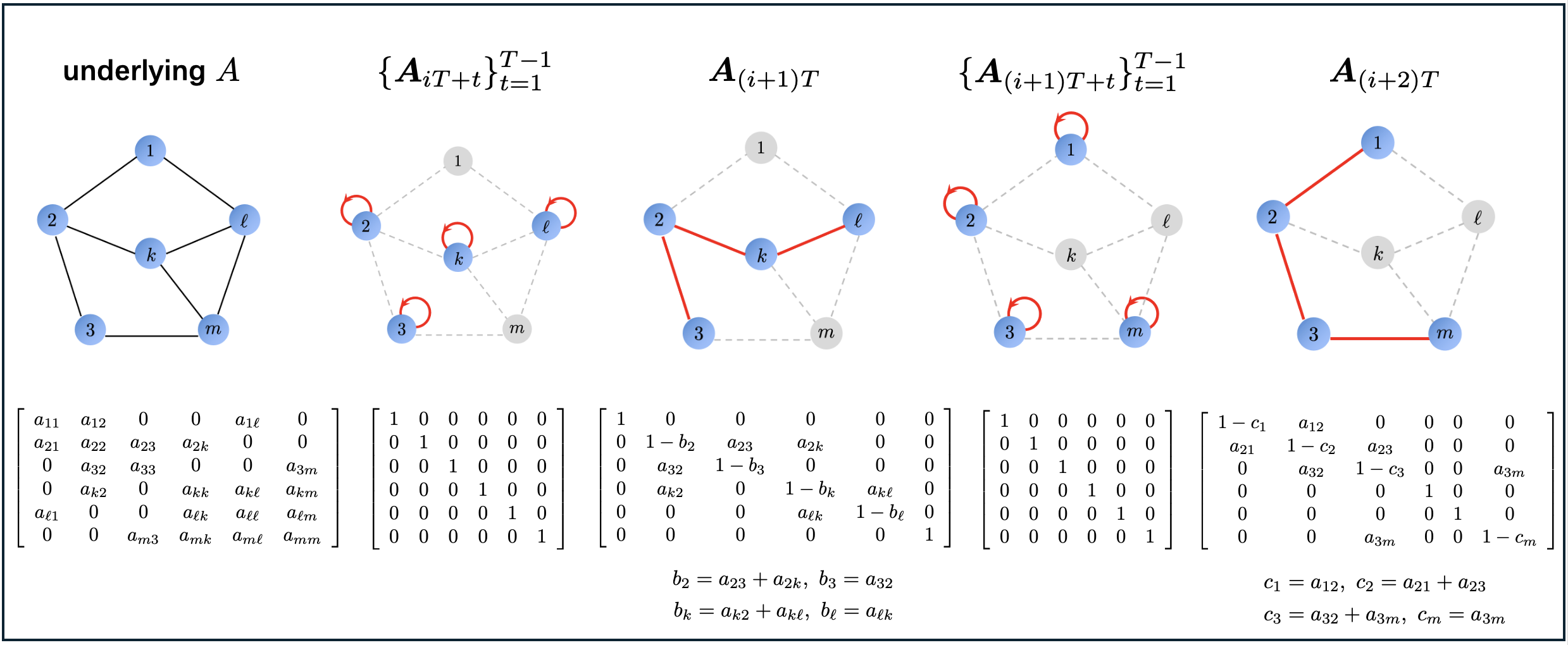}
    \caption{\small {Evolution of the network topology during the $i$-th and $(i+1)$-th block index. The first figure shows the underlying graph. The second and fourth figures depict the topology during the local update steps, with all links turned off and inactive agents dimmed. The third and fifth figures illustrate the topology during the collaboration stage, where only the links between active agents are maintained.}}
    \label{fig:netEvol}
\end{figure*}

\vspace{1mm}
\noindent \textbf{Agent activation.} 
At the start of the $i$-th block iteration, we assume agent $k$ participates with probability $q_k$, where $q_k \eqdef \mathbb{P}(k \text{ is active})$. For an inactive agent, its current model will be set equal to its previous model, resulting in a step-size of 0. In contrast, the step-size of an active agent will be set to a constant regardless of $t$. Therefore, we define the random step-size of agent $k$ as:
\begin{align}\label{mu-agent-activation}
    \bm{\mu}_{k,iT+t} = \begin{cases}
        \mu ,& k \text{ active at $i$-th {block} iteration}, \\
        0, & k \text{ inactive at $i$-th {block} iteration $i$},
    \end{cases}
\end{align}
where $t=1,\cdots, T$. Note that the definition does not depend on the local updates, we can drop the notation $t$  and simply write $\bm{\mu}_{k,iT+t}$ as $\bm{\mu}_{k,i}$.

\vspace{1mm}
\noindent \textbf{Time-varying topology.} 
During the first $T-1$ steps at iteration $i$, we let each active agent conduct local updates by setting: 
\begin{align}\label{28zba}
\bm{A}_{iT+t} = I_K \quad \mbox{when} \quad t\neq T.
\end{align}
At $t=T$, the active agent performs a final local update step followed by a combination step, assigning non-zero weights $\bm{a}_{\ell k, iT} = a_{\ell k}$ to its participating neighbors and adjusting its self-weight to maintain a stochastic combination matrix: 
\begin{align}\label{z71}
    \bm{a}_{\ell k, (i+1)T} \hspace{-0.5mm}=\hspace{-0.5mm} \begin{cases}
        a_{\ell k}, &\hspace{-1mm} {\mbox{$k$ active, $\ell\in \mathcal{N}_k(i)/\{k\}$}}
        \\
        1 \hspace{-0.5mm}-\hspace{-0.5mm} {\sum\limits_{j \in \mathcal{N}_k(i)/\{k\}} {a}_{jk}} ,&\hspace{-1mm} {\mbox{$k$ active, $\ell = k$}}
        \\
        1, &\hspace{-1mm} k \text{ inactive}, \ \ell = k \\
        0, &\hspace{-1mm} \text{otherwise}
    \end{cases}
\end{align}
{where $\mathcal{N}_k(i)\eqdef\{j|j \hspace{-0.5mm}\in\hspace{-0.5mm} \mathcal{N}_k$ and $j$ is active at the $i$-th block iteration$\}$ denotes the set of active neighbors of agent $k$.} We can verify that the matrix $\bm{A}_{iT+t}$ remains {\em doubly stochastic} for every $t=1,2,\cdots T$ and $i$ due to the symmetric property of the underlying matrix. This fact will be important when showing the diffusion algorithm is stable in the mean-square error sense. Figure~\ref{fig:netEvol} illustrates the time-varying topology and combination matrix associated with diffusion learning with both local updates and partial agent participation.

The following lemma establishes the expected value of the combination matrix.

\begin{lemma}[\sc {Network topology in expectation}]
\label{lemm:topo}
    In expectation, the network has the following {doubly-stochastic} combination matrix:
    \begin{align}
        \bar{A}_{iT+t} &\eqdef \mathbb{E}[\bm{A}_{iT+t} ] = [\bar{a}_{\ell k,{iT+t}}], \\
        \bar{a}_{\ell k,{iT+t}} &\eqdef \begin{cases}
             q_{\ell}q_k a_{\ell k} & \ell\neq k,\: t=T \\
             1 - \sum\limits_{\ell \neq k} q_{\ell}q_k a_{\ell k} & \ell = k,\: t=T \\
             0 & \ell \neq k,\: t\neq T \\
             1 &  \ell = k,\: t\neq T
        \end{cases}
    \end{align}
    {Furthermore, it holds that 
    \begin{align}
        \bar{M} &\eqdef \mathbb{E}[\bm{M}_i] = \mathrm{diag}(\mu q_1,\cdots, \mu q_K), \\
        \mathbb{E}[\bm{A}_{iT+t} \bm{M}_i] &= \mu(\bar{A}_{iT+t} - I) + \bar{M}, \label{q23zand210}
    \end{align}
    where $\bm{M}_i = \mathrm{diag}(\bm{\mu}_{1,i},\cdots, \bm{\mu}_{K,i}) \in \mathbb{R}^{K\times K}$. 
    }
\end{lemma}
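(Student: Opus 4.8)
The plan is to introduce, for each agent $k$, a Bernoulli activation indicator $\bm{c}_k \in \{0,1\}$ with $\mathbb{P}(\bm{c}_k = 1) = q_k$, assumed mutually independent across agents. I would then rewrite every random quantity in the lemma as an explicit function of the $\bm{c}_k$'s, so that taking expectations reduces to repeated use of $\mathbb{E}[\bm{c}_k] = q_k$, the independence relation $\mathbb{E}[\bm{c}_k \bm{c}_\ell] = q_k q_\ell$ for $k \neq \ell$, and the indicator identity $\bm{c}_k^2 = \bm{c}_k$. The last identity is the one to keep in mind, since it is what reconciles products of the activation variable with itself in the diagonal bookkeeping.

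For the first claim I would split on $t$. When $t \neq T$ we have $\bm{A}_{iT+t} = I_K$ deterministically, which immediately yields the last two cases. When $t = T$, the off-diagonal entry for $\ell \neq k$ is $\bm{a}_{\ell k} = a_{\ell k}\bm{c}_k\bm{c}_\ell$ (nonzero only when $k$ is active and its neighbor $\ell$ is active, consistent with $\ell \in \mathcal{N}_k(i)$ in \eqref{z71}), so $\mathbb{E}[\bm{a}_{\ell k}] = q_k q_\ell a_{\ell k}$. Combining the active and inactive cases of \eqref{z71} for the diagonal entry gives the single expression $\bm{a}_{kk} = 1 - \bm{c}_k \sum_{j\neq k}\bm{c}_j a_{jk}$, whose expectation is $1 - \sum_{j\neq k}q_k q_j a_{jk}$. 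These match the stated $\bar{a}_{\ell k, iT+t}$, and double stochasticity of $\bar{A}_{iT+t}$ is inherited from the fact already noted in the text that each realization $\bm{A}_{iT+t}$ is doubly stochastic, since expectation preserves row and column sums. The second claim is immediate: $\bm{\mu}_{k,i} = \mu \bm{c}_k$ by \eqref{mu-agent-activation}, hence $\mathbb{E}[\bm{M}_i] = \mu\,\mathrm{diag}(q_1,\cdots,q_K) = \bar{M}$.

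The third identity \eqref{q23zand210} is the crux, and the subtlety I would flag is that $\bm{A}_{iT+t}$ and $\bm{M}_i$ are \emph{not} independent, as both are driven by the same activation indicators, so the product expectation is not $\bar{A}_{iT+t}\bar{M}$. Since $\bm{M}_i = \mu\,\mathrm{diag}(\bm{c}_1,\cdots,\bm{c}_K)$ is diagonal, the $(\ell,k)$ entry of the product collapses to $\mu\,\bm{a}_{\ell k}\bm{c}_k$, so I only need $\mathbb{E}[\bm{a}_{\ell k}\bm{c}_k]$. For $\ell\neq k$, invoking $\bm{c}_k^2 = \bm{c}_k$ gives $\mathbb{E}[a_{\ell k}\bm{c}_k\bm{c}_\ell\,\bm{c}_k] = a_{\ell k}q_k q_\ell = \bar{a}_{\ell k}$, so the entry is $\mu\bar{a}_{\ell k}$, matching $\mu(\bar{A}-I)+\bar{M}$ off the diagonal. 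For $\ell = k$, again using $\bm{c}_k^2 = \bm{c}_k$ gives $\mathbb{E}[(1-\bm{c}_k\sum_{j\neq k}\bm{c}_j a_{jk})\bm{c}_k] = q_k - \sum_{j\neq k}q_k q_j a_{jk}$, so the diagonal entry equals $\mu(q_k - \sum_{j\neq k}q_k q_j a_{jk})$. Finally I would confirm this agrees with the diagonal of $\mu(\bar A - I)+\bar M$, namely $\mu(\bar a_{kk}-1)+\mu q_k = \mu q_k - \mu\sum_{j\neq k}q_j q_k a_{jk}$. The only genuine work is this diagonal computation; the remaining entries follow directly once the indicators are introduced and $\bm{c}_k^2 = \bm{c}_k$ is applied.
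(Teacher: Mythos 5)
Your proposal is correct and follows essentially the same route as the paper's proof: the paper likewise writes each entry of $\bm{A}_{iT+t}$ and of the product $\bm{A}_{iT+t}\bm{M}_i$ in terms of independent activation indicators $\mathbb{I}(k\ \text{is active})$, uses idempotence of the indicators (your $\bm{c}_k^2=\bm{c}_k$) together with $\mathbb{E}[\bm{c}_k\bm{c}_\ell]=q_kq_\ell$, and matches the resulting entries against $\mu(\bar{A}_{iT+t}-I)+\bar{M}$. Your explicit handling of the trivial $t\neq T$ case and the diagonal bookkeeping check coincide with what the paper does, so there is nothing to add.
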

\begin{proof}
    {See Appendix \ref{app:lemmtopo} for proof details.} 
\end{proof}

\vspace{1mm}
\noindent \textbf{Algorithm development.} With the agent activation depicted in \eqref{mu-agent-activation} and the time-varying topology illustrated in \eqref{28zba} and \eqref{z71}, diffusion learning with partial agent participation and local updates can be written as follows: 
\begin{align}\label{diff-compact-local-update-partial}
\swb_{iT+t} &= \A \Big(\swb_{iT+t-1} - \M \nabla \mathcal{Q}(\swb_{iT+t-1};\sxb_{{iT+t}})\Big), \nonumber \\
&\quad \forall t = 1,2,\cdots, T, \quad \forall i=0,1,2,\cdots.
\end{align}
where
\begin{align} \label{z72bvba}
    \A \eqdef \bm{A}_{iT+t} \otimes I_M, \quad \M \eqdef \bm{M}_i \otimes I_M.
\end{align}
According to \eqref{mu-agent-activation}--\eqref{z72bvba}, the active agents will perform \(T\) local updates followed by a combination step with their active neighbors. In contrast, the inactive agents remain unchanged during the entire \(i\)-th block iteration. The implementation details of \eqref{diff-compact-local-update-partial} are listed in Algorithm~\ref{alg:assynATC}.

\begin{algorithm}[t!]
\begin{algorithmic}
\caption{Diffusion Learning with Local Updates and Partial Agent Participation}\label{alg:assynATC}
\STATE{
\textbf{initialize} $w_{k,0} = 0$ for any $k=1,\cdots, K$\;}
\FOR{each block iteration $i={0},1,2,\cdots$}\STATE{
Every agent $k$ is active with probability $q_k$ \\
Every active agent adjust weights according to \eqref{z71}
{
\FOR{each agent $k = 1,\cdots, K$} \STATE {
\vspace{-0.4cm}
\FOR{each local step $t=1,2,\cdots,T$}
\STATE{
\vspace{-0.4cm}
\IF{agent $k$ is active}
\STATE \hspace{-3mm} Sample $n \in \{1,2,\cdots,N_k\}$  
\STATE \hspace{-3mm}  $\w_{k,iT+t} = \w_{k,iT+t-1} \hspace{-0.5mm} - \hspace{-0.5mm} \mu\grad{w}Q_k(\hspace{-0.5mm}\w_{k,iT+t-1};\bm{x}_{k,n})$
\\
\ELSE
    \STATE \hspace{-3mm} $\w_{k,iT+t} = \w_{k,iT+t-1}$
\ENDIF 
}\ENDFOR
}\ENDFOR}
\FOR{each active agent $k$}
    \STATE {$\bm{\psi}_{k,i+1} = \sum\limits_{\ell=1}^K \bm{a}_{\ell k, (i+1)T} \w_{\ell, (i+1)T}$}
    \STATE {Set $\w_{k,(i+1)T} \leftarrow \bm{\psi}_{k,i+1}$}
\ENDFOR
}\ENDFOR
\end{algorithmic}
\end{algorithm}




\subsection{Drifts in optimal solution}\label{sec:drift}
In standard diffusion learning \eqref{diff-compact}, each agent solves the desired problem \eqref{eq:optProb}. However, this is not the case for diffusion learning with partial client participation. Since each agent \( k \) is now activated with probability \( q_k \), the variables \( \{\bm{w}_{k,iT}\}_{k,i} \) generated by Algorithm~\ref{alg:assynATC} will converge in expectation to the solution of the following optimization problem:
\begin{align}\label{eq:optMod}
    w^o \eqdef \argmin_{w\in\mathbb{R}^M} \frac{1}{K}\sum_{k=1}^K q_k J_k(w).
\end{align}
We will establish this fact in subsequent sections where we examine how close the iterates at the agents get to this $w^o$. Here we provide some useful intuition to motivate \eqref{eq:optMod}. 

First, note that if an agent \( k \) does not participate in the learning process (i.e., \( q_k = 0\) throughout all iterations), then the optimization problem \eqref{eq:optMod}  should exclude the local loss function \( J_k(w) \). Additionally, suppose each agent \( k \) is initialized at \( w^o \) and consider the case where 
there are no local updates, i.e., \( T = 1 \). By taking the expectation of both sides of  recursion \eqref{diff-compact-local-update-partial}, we obtain:
\begin{align}
\mathbb{E}[\swb_{i+1}] &= \mathbb{E}\Big[\bm{\mathcal{A}}_{i+1} \big(\swb^o - \M \nabla \mathcal{Q}(\swb^o;\sxb_{{i}})\big)\Big] \nonumber \\
&= \bar{\mathcal{A}} \swb^o \hspace{-1mm}-\hspace{-1mm} \bar{\mathcal M} \nabla \mathcal{J}(\swb^o)\hspace{-1mm}+\hspace{-1mm}\mu\left( I-\bar{\mathcal A} \right) \nabla \mathcal{J}(\swb^o) \label{dift-optimal-solution}
\end{align}
where $\swb^o = w^o \otimes I_M$, $\bar{\mathcal M} = \bar{M} \otimes I_M$, and $\bar{\mathcal A} = \bar{A} \otimes I_M$. The last equality holds because $\bm{\mathcal{A}}_{i+1}$, $\M$ are independent of $\nabla \mathcal{Q}(\swb^o;\sxb_{{i}})$, and in view of relation \eqref{q23zand210}.
Left-multiplying both sides of \eqref{dift-optimal-solution} by \(\mathds{1}^\tran / K\), we obtain 
\begin{align}
\bar{w}_{i+1} \eqdef \frac{1}{K}\mathds{1}_K^\tran\mathbb{E}[\swb_{i+1}] = w^o - \frac{\mu}{K}\sum_{k=1}^K q_k \nabla J_k(w^o).
\end{align}
To guarantee that $\bar{w}_{i+1}$ stay at the optimal solution, i.e., $\bar{w}_{i+1} = w^o$, we need to ensure 
\begin{align}
    \frac{1}{K}\sum_{k=1}^K q_k \nabla J_k(w^o) = 0. 
\end{align}
This condition indicates that the \( \{\bm{w}_{k,iT}\} \) generated by Algorithm~\ref{alg:assynATC} converge in expectation to the solution of problem \eqref{eq:optMod}. See Figure~\ref{fig:drifts} for an illustration. 

{
\subsection{Drifts correction when activation probability is known}
Algorithm~\ref{alg:assynATC} converges in expectation to the solution of the drifted problem \eqref{eq:optMod}. However, these drifts can be corrected if the activation probabilities \( q_k \) are known.

Assuming each \( q_k > 0 \) is known to agent \( k \), we modify the step-size in \eqref{mu-agent-activation} as  
\begin{align}\label{mu-agent-activation2}
    \bm{\mu}^{\prime}_{k,iT+t} = \begin{cases}
        \mu/q_k ,& k \text{ active at iteration } i, \\
        0, & k \text{ inactive at iteration } i.
    \end{cases}
\end{align}  
Thus, an active agent \( k \) adopts a step-size inversely proportional to \( q_k \). Defining  
\begin{align}
    \bm{M}_{i}^{\prime} &\eqdef \mathrm {diag}\left( \bm{\mu}^{\prime}_{1,i},\dots,\bm{\mu}^{\prime}_{K,i} \right),\\
    \M^{\prime}&\eqdef \bm{M}^{\prime}_i \otimes I_M,
\end{align}  
Then the update rule \eqref{diff-compact-local-update-partial} becomes  
\begin{align}\label{diff-compact-local-update-partial2}
\swb_{iT+t} &= \A \Big(\swb_{iT+t-1} - \M^{\prime} \nabla \mathcal{Q}(\swb_{iT+t-1};\sxb_{{iT+t}})\Big), \nonumber \\
&\quad \forall t = 1,2,\dots, T, \quad \forall i=0,1,2,\dots.
\end{align}  
Now, suppose each agent starts at the fixed point \( w^{*} \) and set \( T=1 \). Taking expectations in \eqref{diff-compact-local-update-partial2} yields  
\begin{align}
\mathbb{E}[\swb_{i+1}] &= \mathbb{E}\Big[\bm{\mathcal{A}}_{i+1} \big(\swb^* - \M^{\prime} \nabla \mathcal{Q}(\swb^*;\sxb_{{i}})\big)\Big] \label{dift-optimal-solution2}\\
&= \bar{\mathcal{A}} \swb^* - \mu \nabla \mathcal{J}(\swb^*) + \mu^{2} (I-\bar{\mathcal A}) \bar{\mathcal M}^{-1} \nabla \mathcal{J}(\swb^*), \nonumber
\end{align}  
where \( \swb^* \eqdef w^* \otimes I_M \). The last equality follows from  
\begin{align}
\mathbb{E} \bm{\mathcal{A}}_{i+1} \M^{\prime} &= \mu^{2} (\Abar-I)\bar{\mathcal M}^{-1}+\mu I,
\end{align}  
which can be derived similarly to \eqref{q23zand210}. Left-multiplying \eqref{dift-optimal-solution2} by \( \mathds{1}^\tran / K \) gives  
\begin{align}
    \bar{w}_{i+1} = \frac{1}{K}\mathds{1}_K^\tran\mathbb{E}[\swb_{i+1}] = w^* - \frac{\mu}{K}\sum_{k=1}^K  \nabla J_k(w^*).
\end{align}  
To ensure \( \bar{w}_{i+1} \) remains at \( w^* \), we require  
\begin{align}
    \frac{1}{K}\sum_{k=1}^K  \nabla J_k(w^*) = 0.
\end{align}  
This condition implies that the fixed point $w^\star$ of the sequence \( \{\bm{w}_{k,iT}\} \) generated by \eqref{diff-compact-local-update-partial2} is the  solution of problem \eqref{eq:optProb}.  

\begin{figure}[t!]
    \centering
    \includegraphics[width=0.47\textwidth]{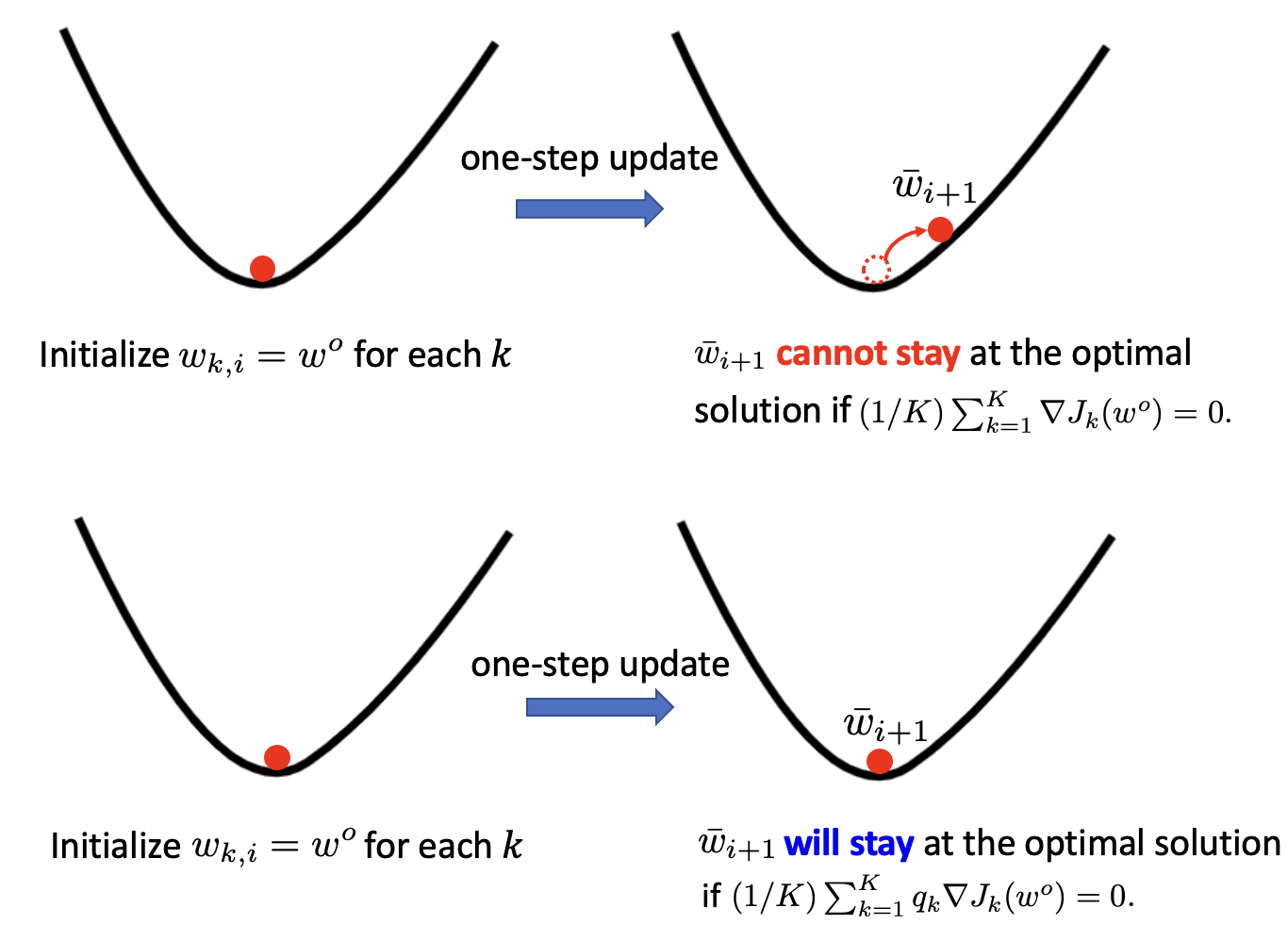}
    \caption{\small {Suppose each agent \( k \) starts at the optimal solution, i.e., \( w_{k,i} = w^o \). The one-step update \( \bar{w}_{i+1} \) will remain at the optimal solution only if the condition \( \frac{1}{K} \sum_{k=1}^K q_k \nabla J_k(w^o) = 0 \) is satisfied. This implies that \( w^o \) is, in fact, the optimal solution to problem \eqref{eq:optMod}. } 
    }
    \label{fig:drifts}
\end{figure}

Although scaling the step-sizes as in \eqref{mu-agent-activation2} is effective for correcting drifts, our convergence analysis will focus on the more prevalent scenario in which the activation probabilities \( q_k \) are not known a priori. In such cases, Algorithm~\ref{alg:assynATC} converges to the solution of the drifted problem \eqref{eq:optMod}. It is worth noting that the convergence results established in Sections~\ref{sec-sta} and \ref{sec-msd} can be readily extended to Algorithm~\eqref{diff-compact-local-update-partial2}.}

\section{Relation with existing algorithms}
Diffusion learning with local updates and partial agent participation, as described by \eqref{diff-compact-local-update-partial}, offers a versatile framework. By adjusting the number of local updates, the network topology, and the activation probability for each agent, the proposed recursion will reduce to several existing algorithms.

\vspace{1mm}
\noindent \textbf{Federated learning with full agent participation.} Let $q_k=1$ for each $k=1,\cdots, K$ and $\bm{A}_{iT} = (1/K)\mathds{1}_K\mathds{1}_K^\tran$, then recursion \eqref{diff-compact-local-update-partial} reduces to 
\begin{align}
\bm{\phi}_{k,iT + t} &= \bm{w}_{k,iT + t-1} - \mu \nabla_{w^\tran} Q_k(\bm{w}_{iT + t-1};\bm{x}_{k,iT+t}) \label{quadn-0}\\
\bm{w}_{k,iT + t} &=
\begin{cases}
  \bm{\phi}_{k,iT + t}, & \text{if $t\neq T$} \\
  (1/K)\sum_{\ell=1}^K \bm{\phi}_{\ell,iT + t}, & \text{if $t =  T$} \\
\end{cases} \label{quadn-1}
\end{align}
which is federated learning with full agent participation \cite{konevcny2016federated}. 

\vspace{1mm}
\noindent \textbf{Federated learning with partial agent participation.} Suppose a uniformly random subset $\mathcal{S}_i$ of agents is activated at each $i$-th block of iterations, with $|\mathcal{S}_i| = S$. In this scenario, if we set
\begin{align}
    \bm{a}_{\ell k, iT} = \begin{cases}
        \frac{1}{S}, & \text{ if both}\ \ell \:\&\: k \text{ active} \\
        1, & \text{ if $\ell = k$, and $k$ inactive}  \\
        0, & \text{ otherwise}
    \end{cases}
\end{align}
then recursion \eqref{diff-compact-local-update-partial} reduces to 
\begin{align}
\bm{\phi}_{k,iT \hspace{-0.1mm}+\hspace{-0.1mm} t} &\hspace{-1mm}=\hspace{-1mm} \bm{w}_{k,iT \hspace{-0.1mm}+\hspace{-0.1mm} t-1} \hspace{-1mm}-\hspace{-1mm} \mu \nabla Q(\bm{w}_{iT + t-1};\bm{x}_{k,iT+t}),\hspace{-0.5mm} \forall k \hspace{-0.8mm}\in\hspace{-0.8mm} \mathcal{S}_i \label{z7bnasd-0}\\
\bm{w}_{k,iT + t} &=
\begin{cases}
  \bm{\phi}_{k,iT + t} & \text{if $t\neq T$} \\
  (1/S)\sum_{\ell \in \mathcal{S}_i} \bm{\phi}_{\ell,iT + t} & \text{if $t =  T$} \label{z7bnasd-1} \\
\end{cases}
\end{align}
which is federated learning with partial participation \cite{karimireddy2020scaffold}.

\vspace{1mm}
\noindent \textbf{Standard diffusion learning.} Let $q_k=1$ for each agent $k=1,\cdots, K$, $T=1$, and $\bm{A}_{iT} = A$, recursion \eqref{diff-compact-local-update-partial} reduces to 
\begin{align}
\bm{\phi}_{k,i+1} &= \bm{w}_{k,i} - \mu \nabla_{w^\tran} Q_k(\bm{w}_{i};\bm{x}_{k,i}), \\
\bm{w}_{k,i+1} &=
  \sum_{\ell \in \mathcal{N}_k} a_{\ell k}\bm{\phi}_{\ell,i+1}, 
\end{align}
which is standard vanilla diffusion learning method \cite{sayed2014adaptation,sayed2014adaptive}.

\vspace{1mm}
\noindent \textbf{Asynchronous diffusion learning.} Let 
each agent $k$ get activated with probability $q_k$ and set $T=1$. If the learning rate $\bm{\mu}_{k,iT+t}$ follows \eqref{mu-agent-activation} and the combination matrix $\bm{A}_{iT}$ follows \eqref{z71}, then recursion \eqref{diff-compact-local-update-partial} reduces to 
\begin{align}
\bm{\phi}_{k,i+1} &= \bm{w}_{k,i} - \bm{\mu}_{k,i} \nabla_{w^\tran} Q_k(\bm{w}_{i};\bm{x}_{k,i}), \\
\bm{w}_{k,i+1} &=
  \sum_{\ell \in \mathcal{N}_k} \bm{a}_{\ell k,i}\bm{\phi}_{\ell,i+1}, 
\end{align}
which is the asynchronous diffusion learning method \cite{zhao2014asynchronous}.

\vspace{1mm}
\noindent \textbf{Decentralized federated learning.} Let 
$q_k=1$, $\bm{A}_{iT}=A$, and $\bm{\mu}_{k,iT+t}$ follows \eqref{mu-agent-activation}, then recursion \eqref{diff-compact-local-update-partial} reduces to 
\begin{align}
\bm{\phi}_{k,iT + t} &= \bm{w}_{k,iT + t-1} - \mu \nabla_{w^\tran} Q_k(\bm{w}_{iT + t-1};\bm{x}_{k,iT+t}) \label{va8-0} \\
\bm{w}_{k,iT + t} &=
\begin{cases}
  \bm{\phi}_{k,iT + t} & \text{if $t\neq T$} \\
  \sum_{\ell \in \mathcal{N}_k} a_{\ell k} \bm{\phi}_{\ell,iT + t} & \text{if $t =  T$} \\
\end{cases}\label{va8-1}
\end{align}
which is decentralized federated learning  \cite{sun2022decentralized}.

\vspace{1mm}
Due to its versatile algorithmic structure, the convergence analysis—and, in particular, the mean-square-deviation (MSD) performance—of Algorithm~\ref{alg:assynATC} directly applies to the aforementioned algorithms. To our knowledge, the tight MSD expression for federated learning (i.e., recursions \eqref{quadn-0}-\eqref{quadn-1} and recursions \eqref{z7bnasd-0}-\eqref{z7bnasd-1}) and decentralized federated learning (i.e., recursions \eqref{va8-0}-\eqref{va8-1}) had not been established earlier in the literature.

\section{Stability Analysis}
\label{sec-sta}
We begin our analysis by establishing the mean-square-error stability of the algorithm, and by showing that the agents succeed in estimating the true model $w^o$ characterized by \eqref{eq:optMod}. This process begins with defining the gradient noise and then establishing the error recursion formulation. Thus, we define the errors as $\we_{k,iT+t} \eqdef w^o - \w_{k,iT+t}$, $\wse_{iT+t} \eqdef \col{\we_{k,iT+t}}$, the local gradient noise as
\begin{align}
    &\quad \quad \bm{s}_{k,iT+t} \nonumber \\
    &\eqdef \nabla_{w^\tran} Q_k(\bm{w}_{iT + t-1};\bm{x}_{k,iT+t})  \hspace{-0.5mm} - \hspace{-0.5mm}  \grad{w}J_k(\w_{k,iT+t-1}) 
    ,
\end{align}
and the global gradient noise as the stacked column of individual gradient noises:
\begin{align}
    \bm{s}_{iT+t} \eqdef \col{\bm{s}_{k,iT+t}}.
\end{align}
We now verify that the estimate of the gradient is unbiased and that the variance of the gradient noise is bounded.
\begin{lemma}[\sc{Second-order Gradient noise}]\label{lemm:gradNoise}
    The individual gradient noise $\bm{s}_{k,iT+t}$ is zero-mean and has bounded second-order moment, namely, there exist constants $\beta_s^2$ and $\sigma_s^2$ such that:
	\begin{align}
		\mathbb{E}\Vert \bm{s}_{k,iT+t}\Vert^2  \leq \beta_s^2 \mathbb{E}\Vert \we_{k,iT+t-1}\Vert^2 + \sigma_s^2,
	\end{align}
    Consequently, the network noise $\bm{s}_{iT+t}$ is also zero-mean and has bounded second-order moment:
	 	\begin{align}
	 		\mathbb{E}\Vert \bm{s}_{iT+t}\Vert^2  \leq \beta_s^2 \mathbb{E}\Vert \wse_{iT+t-1}\Vert^2 + \sigma_s^2.
	 	\end{align}
\end{lemma}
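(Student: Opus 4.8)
The plan is to prove the per-agent statement first and then lift it to the network by stacking. Throughout I would condition on the filtration $\F$ generated by all random quantities (sample draws and activation patterns) up to and including step $iT+t-1$; with respect to this filtration the iterate $\w_{k,iT+t-1}$, and hence the error $\we_{k,iT+t-1}$, is measurable, whereas the fresh sample $\bm{x}_{k,iT+t}$ is drawn independently and uniformly from $\{x_{k,1},\ldots,x_{k,N_k}\}$.

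The zero-mean claim follows directly from the finite-sum structure of the risk. Since $J_k(w)=\frac{1}{N_k}\sum_{n=1}^{N_k}Q_k(w;x_{k,n})$ implies $\grad{w}J_k(w)=\frac{1}{N_k}\sum_{n=1}^{N_k}\grad{w}Q_k(w;x_{k,n})$, uniform sampling makes the stochastic gradient an unbiased estimator of the true gradient at any $\F$-measurable argument. Therefore $\mathbb{E}[\grad{w}Q_k(\w_{k,iT+t-1};\bm{x}_{k,iT+t})\mid\F]=\grad{w}J_k(\w_{k,iT+t-1})$, so that $\mathbb{E}[\bm{s}_{k,iT+t}\mid\F]=0$ and, by the tower property, $\mathbb{E}[\bm{s}_{k,iT+t}]=0$.

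For the variance bound the key idea is to anchor the noise at the deterministic point $w^o$. I would decompose
\begin{align*}
\bm{s}_{k,iT+t}
&= \big[\grad{w}Q_k(\w_{k,iT+t-1};\bm{x}_{k,iT+t}) - \grad{w}Q_k(w^o;\bm{x}_{k,iT+t})\big] \\
&\quad - \big[\grad{w}J_k(\w_{k,iT+t-1}) - \grad{w}J_k(w^o)\big] + \bm{s}_k(w^o),
\end{align*}
where $\bm{s}_k(w^o)\eqdef\grad{w}Q_k(w^o;\bm{x}_{k,iT+t})-\grad{w}J_k(w^o)$ is the gradient noise evaluated at the fixed model. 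Applying $\|a+b+c\|^2\le 3(\|a\|^2+\|b\|^2+\|c\|^2)$ together with the $\delta$-Lipschitz gradient property of $Q_k$ from Assumption~\ref{assum:fct} (which also renders each $J_k$ $\delta$-smooth) bounds the first two brackets and yields $\|\bm{s}_{k,iT+t}\|^2\le 6\delta^2\|\we_{k,iT+t-1}\|^2+3\|\bm{s}_k(w^o)\|^2$. The remaining term is a fixed-point quantity: because $\bm{x}_{k,iT+t}$ ranges over a finite dataset, $\bm{s}_k(w^o)$ takes at most $N_k$ values and $\mathbb{E}\|\bm{s}_k(w^o)\|^2=\frac{1}{N_k}\sum_{n=1}^{N_k}\|\grad{w}Q_k(w^o;x_{k,n})-\grad{w}J_k(w^o)\|^2$ is a finite deterministic constant. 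Taking expectations then gives the per-agent bound with $\beta_s^2=6\delta^2$ and $\sigma_s^2=3\max_k\mathbb{E}\|\bm{s}_k(w^o)\|^2$.

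The network statement follows by summation. Since $\bm{s}_{iT+t}$ stacks the per-agent noises, $\|\bm{s}_{iT+t}\|^2=\sum_{k=1}^K\|\bm{s}_{k,iT+t}\|^2$ and likewise $\|\wse_{iT+t-1}\|^2=\sum_{k=1}^K\|\we_{k,iT+t-1}\|^2$, so adding the per-agent inequalities over $k$ reproduces the claimed form after absorbing the factor $K$ on the additive term into a redefined $\sigma_s^2$. I expect no genuine obstacle in this argument; the only points needing care are the bookkeeping between conditional and total expectation and the verification that the anchoring constant $\mathbb{E}\|\bm{s}_k(w^o)\|^2$ is finite, which is immediate from the finite-sample model. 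For an inactive agent the stochastic gradient is never actually formed, but this is harmless: either one draws an unused sample so the bound holds verbatim, or one notes that $\bm{s}_{k,iT+t}$ enters the error recursion only through the zero step-size $\bm{\mu}_{k,i}=0$.
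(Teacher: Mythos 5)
Your proof is correct and takes essentially the same route as the paper's: the same three-term decomposition anchored at $w^o$, Jensen's inequality combined with the $\delta$-Lipschitz gradient condition to obtain $\beta_s^2 = 6\delta^2$, and summation over agents to lift the per-agent bound to the network bound. The only cosmetic difference is that you retain the fixed-point term as the variance $\mathbb{E}\Vert \grad{w}Q_k(w^o;\x_{k,iT+t}) - \grad{w}J_k(w^o)\Vert^2$, whereas the paper bounds it by the second moment $\mathbb{E}\Vert \grad{w}Q_k(w^o;\x_{k,iT+t})\Vert^2$ when defining $\sigma_s^2$.
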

\begin{proof}
The proof is found in Appendix \ref{app:lemmGradNoise}.
\end{proof}
\begin{lemma}[\sc{Fourth-order Gradient noise}]\label{lemm:gradNoise4}
    The individual gradient noise $\bm{s}_{k,iT+t}$ is zero-mean and has bounded fourth-order moment, namely, there exist constants $\bar{\beta}_s^4$ and $\bar{\sigma}_s^4$ such that:
	\begin{align}
		\mathbb{E}\Vert \bm{s}_{k,iT+t}\Vert^4  \leq \bar{\beta}_s^4 \mathbb{E}\Vert \we_{k,iT+t-1}\Vert^4 + \bar{\sigma}_s^4.
	\end{align}
	Consequently, the network noise $\bm{s}_{iT+t}$ is also zero-mean and has bounded fourth-order moment:
	 	\begin{align}
	 		\mathbb{E}\Vert \bm{s}_{iT+t}\Vert^4  \leq \bar{\beta}_s^2 \mathbb{E}\Vert \wse_{iT+t-1}\Vert^4 + \bar{\sigma}_s^4.
	 	\end{align}
\end{lemma}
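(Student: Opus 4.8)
The plan is to mirror the argument used for the second-order bound in Lemma~\ref{lemm:gradNoise}, but to propagate fourth powers throughout and to supply the elementary power-mean inequalities needed to pass from the per-agent estimate to the stacked network vector. First I would dispatch the zero-mean claim exactly as in the second-order case: conditioning on the history $\mathcal{F}_{iT+t-1}$ renders $\w_{k,iT+t-1}$ deterministic, so that the only randomness in $\bm{s}_{k,iT+t}$ comes from the uniform sample $\x_{k,iT+t}$. Since $J_k(w)=(1/N_k)\sum_{n=1}^{N_k}Q_k(w;x_{k,n})$ and the sample is uniform, $\mathbb{E}[\nabla_{w^\tran}Q_k(\w_{k,iT+t-1};\x_{k,iT+t})\mid \mathcal{F}_{iT+t-1}]=\grad{w}J_k(\w_{k,iT+t-1})$, whence $\mathbb{E}[\bm{s}_{k,iT+t}\mid\mathcal{F}_{iT+t-1}]=0$.

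For the individual fourth-moment bound I would introduce the gradient noise evaluated at the optimum, using the \emph{same} random sample,
\begin{align}
    \bm{s}_k^o \eqdef \nabla_{w^\tran}Q_k(w^o;\x_{k,iT+t}) - \grad{w}J_k(w^o),
\end{align}
and decompose $\bm{s}_{k,iT+t}=(\bm{s}_{k,iT+t}-\bm{s}_k^o)+\bm{s}_k^o$. The elementary inequality $\|a+b\|^4\le 8\|a\|^4+8\|b\|^4$ then splits the moment into a variable part and a constant part. The variable part is controlled by Assumption~\ref{assum:fct}: both $\nabla_{w^\tran}Q_k(\cdot;x)$ and $\grad{w}J_k(\cdot)$ (whose Hessian is bounded above by $\delta I$) are $\delta$-Lipschitz, so the triangle inequality gives $\|\bm{s}_{k,iT+t}-\bm{s}_k^o\|\le 2\delta\|\we_{k,iT+t-1}\|$, contributing at most $8(2\delta)^4\,\mathbb{E}\|\we_{k,iT+t-1}\|^4$. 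For the constant part, because each local dataset $\{x_{k,n}\}_{n=1}^{N_k}$ is finite, the vector $\bm{s}_k^o$ ranges over the finite set $\{\nabla_{w^\tran}Q_k(w^o;x_{k,n})-\grad{w}J_k(w^o)\}_{n=1}^{N_k}$, so $\mathbb{E}\|\bm{s}_k^o\|^4$ is a finite constant that supplies $\bar{\sigma}_s^4$. Collecting both pieces yields the stated bound with $\bar{\beta}_s^4=128\,\delta^4$ and a suitable $\bar{\sigma}_s^4$.

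Finally, for the network noise I would exploit $\|\bm{s}_{iT+t}\|^2=\sum_{k=1}^K\|\bm{s}_{k,iT+t}\|^2$ together with the Cauchy--Schwarz/power-mean inequality $\big(\sum_k a_k\big)^2\le K\sum_k a_k^2$ applied to $a_k=\|\bm{s}_{k,iT+t}\|^2$, giving
\begin{align}
    \|\bm{s}_{iT+t}\|^4 \le K\sum_{k=1}^K\|\bm{s}_{k,iT+t}\|^4.
\end{align}
Taking expectations, substituting the per-agent bound, and using $\sum_k\|\we_{k,iT+t-1}\|^4\le\big(\sum_k\|\we_{k,iT+t-1}\|^2\big)^2=\|\wse_{iT+t-1}\|^4$ produces a bound of the advertised form once the constants are redefined to absorb the factors of $K$. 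The only genuinely delicate point is the finiteness of $\mathbb{E}\|\bm{s}_k^o\|^4$; this is where the finite-dataset structure of \eqref{eq:optProb} (rather than a generic stochastic-gradient oracle) is essential, and it is the step I would state most carefully, even though it follows at once from the finite range of $\bm{s}_k^o$. Everything else is a routine fourth-power analogue of the second-order computation.
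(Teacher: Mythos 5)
Your proposal is correct, and it follows the same basic strategy as the paper's proof (zero-mean by conditioning on the history with uniform sampling; Jensen's inequality plus the $\delta$-Lipschitz condition for the per-agent fourth moment; then $\|\bm{s}_{iT+t}\|^4=\big(\sum_k\|\bm{s}_{k,iT+t}\|^2\big)^2\le K\sum_k\|\bm{s}_{k,iT+t}\|^4$ and $\sum_k\|\we_{k,iT+t-1}\|^4\le\|\wse_{iT+t-1}\|^4$ to pass to the network level). The one substantive difference is the decomposition: the paper splits $\bm{s}_{k,iT+t}$ into \emph{four} terms, $\grad{w}J_k(\w_{k,iT+t-1})-\grad{w}J_k(w^o)$, $\grad{w}J_k(w^o)-\grad{w}Q_k(w^o;\x_{k,iT+t})$, $\grad{w}Q_k(w^o;\x_{k,iT+t})$, and $\grad{w}Q_k(\w_{k,iT+t-1};\x_{k,iT+t})$, incurring a Jensen factor of $64$, and its resulting "constant" $\bar{\sigma}_s^4$ contains $\mathbb{E}\|\grad{w}Q_k(\w_{k,iT+t-1};\x_{k,iT+t})\|^4$, a quantity that depends on the current iterate and is therefore not genuinely a constant. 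Your two-term split $\bm{s}_{k,iT+t}=(\bm{s}_{k,iT+t}-\bm{s}_k^o)+\bm{s}_k^o$, with $\bm{s}_k^o$ the noise evaluated at $w^o$ with the same sample, bounds the variable part by $2\delta\|\we_{k,iT+t-1}\|$ and leaves a residual $\mathbb{E}\|\bm{s}_k^o\|^4$ that is a true constant by the finite-dataset structure (it even yields the same leading coefficient $128\delta^4$ with a smaller Jensen factor). In this sense your argument is not merely equivalent but slightly tighter and more careful than the paper's; your closing remark that the finiteness of $\mathbb{E}\|\bm{s}_k^o\|^4$ is the step deserving the most care is exactly the point at which the paper's own write-up is loose.
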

\begin{proof}
The proof is found in Appendix \ref{app:lemmGradNoise4}.
\end{proof}
Next, we write an error recursion in terms of the gradient noise variables. We first introduce the following terms:
\begin{align}
    \Hi &\eqdef \col{H_{k,iT+t-1}}_{k=1}^K, \\
	H_{k,iT+t-1} & \eqdef \int_0^1 \nabla_{w}^2 J_k(w^o-\tau \tilde{\w}_{k,iT+t-1} ) d\tau, \\
	b &\eqdef -\col{\grad{w}J_k(w^o)},
\end{align}
and use them to write:
\begin{align}\label{eq:errRec}
	\wse_{iT+t} = &\A (I- \M \Hi )\wse_{iT+t-1} - \A \M  b
 \notag \\ &
 + \A \M \bm{s}_{iT+t}.
\end{align}

In the theorem below, we establish that the algorithm converges to a region around $w^o$.

\begin{theorem}[\sc{Second-order stability}]\label{thrm:2ndStab}
Under Assumptions \ref{assum:combMat} and \ref{assum:fct}, and for small enough step-size $\mu$, it holds that for every agent $k=1,2,\cdots, K$ that 
\begin{align}
    \limsup_{i\to \infty} \mathbb{E} \Vert {\we_{k,iT}}\Vert^2 = O(\mu).
\end{align} 
\end{theorem}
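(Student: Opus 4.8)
The plan is to turn the per-step recursion \eqref{eq:errRec} into a block-to-block mean-square recursion of the form $\mathbb{E}\Vert\wse_{(i+1)T}\Vert^2 \le (1-O(\mu))\,\mathbb{E}\Vert\wse_{iT}\Vert^2 + O(\mu^2)$ and then iterate it. First I would condition on the history $\F$ up to the current step, enlarged by the activation pattern drawn at the start of block $i$ (this single draw fixes both $\M$ and the end-of-block matrix $\A$ for the whole block). Since $\bm{s}_{iT+t}$ is zero-mean and uncorrelated with this conditioning by Lemma \ref{lemm:gradNoise}, the noise cross-terms vanish from $\mathbb{E}[\Vert\wse_{iT+t}\Vert^2\mid\F]$, leaving a contractive deterministic-given-activation part, a bias part, and a variance term bounded by $\mu^2(\beta_s^2\,\mathbb{E}\Vert\wse_{iT+t-1}\Vert^2+\sigma_s^2)$; the $\mu^2\beta_s^2$ piece is absorbed into the contraction and $\mu^2\sigma_s^2$ becomes an $O(\mu^2)$ floor.

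For the contractive part I would unroll one block. During the first $T-1$ steps $\A=I$, so each active agent applies $I-\mu H_{k}$, whose spectral norm is at most $1-\mu\nu$ for small $\mu$ by the bounds $\nu I\le\nabla^2 J_k\le\delta I$, while inactive agents are frozen; at $t=T$ one more such step is followed by the doubly-stochastic $\A$, which is non-expansive ($\Vert\A x\Vert\le\Vert x\Vert$ for every realization). Because the activation is constant across the block, agent $k$ contracts by $(1-\mu\nu)^{2T}$ with probability $q_k$ and by $1$ otherwise; as $\wse_{iT}$ is independent of block-$i$'s activation, taking expectation over activation and using $q_k\ge q_{\min}>0$ gives a per-agent factor $1-q_k\big(1-(1-\mu\nu)^{2T}\big)\le 1-q_{\min}\,T\mu\nu+O(\mu^2)$, i.e. a block contraction $1-O(\mu)$.

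The delicate part---and the step I expect to be the main obstacle---is the bias $-\A\M b$ with $b=-\col{\nabla_{w^\tran}J_k(w^o)}$. Its quadratic contribution $\Vert\A\M b\Vert^2=O(\mu^2)$ is harmless, but a crude Young bound on the cross-term $\langle\wse,\A\M b\rangle$ only yields an $O(\mu)$ floor, which divided by the $O(\mu)$ contraction gap would give a useless $O(1)$ bound. The resolution is to split $\wse$ along the network-average (centroid) direction $\mathds{1}_K\otimes I_M$ and its orthogonal deviation. Every realization of $\A$ is doubly stochastic, so it preserves the centroid, and the bias cancels there precisely because $w^o$ solves the drifted problem \eqref{eq:optMod}, giving $\frac{1}{K}\sum_k q_k\nabla_{w^\tran}J_k(w^o)=0$; the centroid then contracts at rate $1-O(\mu)$ from strong convexity with no bias and carries only the $O(\mu)$ noise floor. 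The bias survives only in the deviation direction, but there the combination contracts at an $O(1)$ rate set by the spectral gap of $\bar A=\mathbb{E}\,\A$ (primitive because $A$ is primitive and $q_k>0$); an $O(1)$ contraction acting on an $O(\mu)$ forcing leaves a deviation of size $O(\mu)$, i.e. an $O(\mu^2)$ contribution to the squared error.

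Assembling the centroid and deviation estimates produces the advertised block recursion, whose fixed point is $O(\mu^2)/O(\mu)=O(\mu)$; iterating in $i$ and using $\mathbb{E}\Vert\we_{k,iT}\Vert^2\le\mathbb{E}\Vert\wse_{iT}\Vert^2$ gives the claim. Besides the bias cancellation, the remaining technical hurdles I anticipate are the correlation among $\A$, $\M$ and the block-constant activation---which forbids replacing products by products of expectations and forces the activation average to be taken at the block level---and controlling the growth of the deviation during the $T-1$ combination-free local steps, where agents drift apart by $O(\mu)$ per step before the block-ending combination pulls them back; this is where finiteness of $T$ is used.
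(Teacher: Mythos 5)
Your proposal is correct and follows essentially the same route as the paper's Appendix D: there, too, one block is unrolled, the error is split (via a Jordan/eigendecomposition of the doubly stochastic random combination matrix, which for symmetric $A$ is exactly your centroid-versus-deviation split) into the network average $\bar{\ws}_i$ — where the leading bias term is conditionally zero-mean precisely because $\frac{1}{K}\sum_{k} q_k \nabla_{w^\tran} J_k(w^o)=0$, so only its $O(\mu^2)$ second moment survives — and a deviation part contracting at an $O(1)$ rate, and the resulting coupled $2\times 2$ recursion is iterated to an $O(\mu^2)/O(\mu)=O(\mu)$ fixed point. The one imprecision worth fixing is your attribution of the deviation's mean-square contraction to the spectral gap of $\bar{A}=\mathbb{E}[\bm{A}_{(i+1)T}]$: Jensen's inequality runs in the wrong direction for that, and what is actually needed (and what the paper's constant $\check{\tau}<1$ encodes) is a gap for the second moment $\mathbb{E}\big[\bm{A}_{(i+1)T}^{\tran}\bm{A}_{(i+1)T}\big]$ restricted to the deviation subspace, which does hold under the same hypotheses since with probability $\prod_k q_k>0$ all agents are active and the realized matrix equals the primitive matrix $A$.
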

\begin{proof}
    The proof is found in Appendix \ref{app:thrm2ndStab}.
\end{proof}
While the algorithm remains stable, the effects of its time-varying nature are captured in its convergence rate and error magnitude. Specifically, the convergence rate increases for larger activation probabilities and number of local steps. However, the performance deteriorates with a larger number of local steps.

Before arriving at performance expressions, we need to establish the fourth-order stability of the algorithm. This result will be used to introduce a long-term model and use it as a good substitute for the original model. 

\begin{theorem}[\sc{Fourth-order stability}]\label{thrm:4thStab}
    Under Assumptions \ref{assum:combMat} and \ref{assum:fct}, and for small enough step-size $\mu$, it holds that for every agent $k=1,2,\cdots, K$:
	\begin{align}
		\limsup_{i\to \infty} \mathbb{E} \Vert {\we_{k,iT}}\Vert^4 = O(\mu^2).
 	\end{align}  
\end{theorem}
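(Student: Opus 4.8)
The plan is to mirror the second-order stability argument of Theorem~\ref{thrm:2ndStab}, but now to propagate the fourth moment of the error vector. Starting from the error recursion \eqref{eq:errRec}, I abbreviate $\mathcal{B}_{iT+t}\eqdef \A(I-\M\Hi)$, so that $\wse_{iT+t}=\mathcal{B}_{iT+t}\wse_{iT+t-1}-\A\M b+\A\M\bm{s}_{iT+t}$. I would then work with $\Vert\wse_{iT+t}\Vert^4=(\Vert\cdot\Vert^2)^2$ and condition on the filtration $\F$ generated by the past iterates, activations, and topologies. Because the activation pattern at block $i$ (and hence $\A$ and $\M$) is drawn at the start of the block, the term $a\eqdef\mathcal{B}_{iT+t}\wse_{iT+t-1}-\A\M b$ is $\F$-measurable, while $n\eqdef\A\M\bm{s}_{iT+t}$ satisfies $\mathbb{E}[n\mid\F]=\A\M\,\mathbb{E}[\bm{s}_{iT+t}\mid\F]=0$ by the unbiasedness in Lemma~\ref{lemm:gradNoise}. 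Expanding $(\Vert a\Vert^2+2a^\tran n+\Vert n\Vert^2)^2$ and discarding the odd cross terms leaves $\mathbb{E}[\Vert\wse_{iT+t}\Vert^4\mid\F]\le \Vert a\Vert^4+6\Vert a\Vert^2\,\mathbb{E}[\Vert n\Vert^2\mid\F]+4\Vert a\Vert\,\mathbb{E}[\Vert n\Vert^3\mid\F]+\mathbb{E}[\Vert n\Vert^4\mid\F]$.

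Next I would control each piece. Since $n=\A\M\bm{s}_{iT+t}$ carries the factor $\M=O(\mu)$, Lemmas~\ref{lemm:gradNoise} and \ref{lemm:gradNoise4} give $\mathbb{E}[\Vert n\Vert^2\mid\F]=O(\mu^2)$ and $\mathbb{E}[\Vert n\Vert^4\mid\F]=O(\mu^4)$, while the third-order term is handled by Cauchy--Schwarz, $\mathbb{E}[\Vert n\Vert^3\mid\F]\le(\mathbb{E}[\Vert n\Vert^2\mid\F])^{1/2}(\mathbb{E}[\Vert n\Vert^4\mid\F])^{1/2}=O(\mu^3)$. For the deterministic part, $\A$ is doubly stochastic so $\Vert\A\Vert=1$, and $I-\M\Hi$ is block-diagonal with active blocks bounded by $1-\mu\nu$ and inactive blocks equal to the identity; I would then use the convexity split $\Vert a\Vert^4\le \alpha^{-3}\Vert\mathcal{B}_{iT+t}\wse_{iT+t-1}\Vert^4+(1-\alpha)^{-3}\Vert\A\M b\Vert^4$, choosing $\alpha$ to absorb the contraction and noting $\Vert\A\M b\Vert^4=O(\mu^4)$. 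Taking full expectation then yields a one-step inequality of the form $\mathbb{E}\Vert\wse_{iT+t}\Vert^4\le \rho\,\mathbb{E}\Vert\wse_{iT+t-1}\Vert^4+c_1\mu^2\,\mathbb{E}\Vert\wse_{iT+t-1}\Vert^2+c_2\mu^4$, where the middle term is precisely the second/fourth coupling generated by the $6\Vert a\Vert^2\mathbb{E}[\Vert n\Vert^2\mid\F]$ cross term.

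I would then iterate this inequality across a full block $t=1,\dots,T$ and through the combination step, taking expectation over the random activations and topology, just as in the second-order proof. As captured by Lemma~\ref{lemm:topo}, genuine contraction is unavailable at the local steps (inactive agents keep their blocks equal to the identity and $\A=I$ provides no mixing); it emerges only from the averaging at $t=T$ together with the expectation over activations, which forces one to track the block recursion rather than a single step. This produces a net per-block contraction factor $1-O(\mu)$. Finally I would substitute the already-established bound $\limsup_i\mathbb{E}\Vert\wse_{iT}\Vert^2=O(\mu)$ from Theorem~\ref{thrm:2ndStab} into the coupling term, so that $c_1\mu^2\cdot O(\mu)=O(\mu^3)$ drives the recursion; dividing the $O(\mu^3)+O(\mu^4)$ driving terms by the $O(\mu)$ spectral gap yields the claimed steady-state order $O(\mu^2)$. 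The per-agent statement follows from $\mathbb{E}\Vert\we_{k,iT}\Vert^4\le \mathbb{E}\Vert\wse_{iT}\Vert^4$, since $\wse_{iT}=\col{\we_{k,iT}}$.

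The main obstacle is the block-level contraction under partial participation: because $\M$ and $\A$ are random and, during the first $T-1$ local steps, leave inactive agents uncontracted while providing no mixing, a single-step contraction factor strictly below one is simply not available. The delicate step is therefore to average over the activation pattern and topology across an entire block and invoke the doubly-stochastic structure of $\bm{A}_{iT+t}$ together with the expected matrix $\bar A_{iT+t}$ of Lemma~\ref{lemm:topo} to extract an $O(\mu)$ gap uniformly in $i$, exactly as at second order. The remaining work—bounding the $\Vert a\Vert^2\mathbb{E}[\Vert n\Vert^2\mid\F]$ and $\Vert a\Vert\,\mathbb{E}[\Vert n\Vert^3\mid\F]$ cross terms and verifying that all surviving contributions are $O(\mu^3)$ or smaller—is routine bookkeeping once the second-order result is in hand.
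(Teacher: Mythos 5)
Your overall architecture (zero-mean noise cross terms, fourth/second-moment coupling, per-block iteration, substitution of the second-order bound) matches the spirit of the paper's argument, but there is a genuine gap in how you treat the bias term $\A\M b$, and it is fatal to the claimed $O(\mu^2)$ rate. The vector $b=-\col{\nabla_{w^\tran}J_k(w^o)}$ is $O(1)$: at the drifted optimum only the weighted average $\frac{1}{K}\sum_k q_k\nabla J_k(w^o)$ vanishes, not the individual gradients (the data are heterogeneous). In your convexity split $\Vert a\Vert^4\le\alpha^{-3}\Vert\mathcal{B}_{iT+t}\wse_{iT+t-1}\Vert^4+(1-\alpha)^{-3}\Vert\A\M b\Vert^4$, preserving a net contraction forces $1-\alpha=O(\mu)$, so the bias contribution is $(1-\alpha)^{-3}\,O(\mu^4)=O(\mu)$, not the $c_2\mu^4$ you write. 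Dividing this $O(\mu)$ driving term by the $O(\mu)$ spectral gap yields only $\limsup_i\mathbb{E}\Vert\wse_{iT}\Vert^4=O(1)$. The same obstruction appears if you instead expand the cross term exactly rather than splitting: $\mathbb{E}[\A\M]b=\big(\mu(\bar{\mathcal{A}}-I)+\bar{\mathcal{M}}\big)b$ is a nonzero $O(\mu)$ vector for the full stacked error, so no cancellation is available at the level of $\wse$, and Young's inequality on the resulting $O(\mu)\Vert x\Vert^3$ term reproduces the same $O(\mu)$ forcing.

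This is precisely why the paper does not run the fourth-moment recursion on the full vector $\wse$. It reuses the Jordan-decomposition coordinates $(\bar{\ws}_i,\check{\ws}_i)$ from the proof of Theorem \ref{thrm:2ndStab} and applies $\Vert x+y\Vert^4\le\Vert x\Vert^4+3\Vert y\Vert^4+8\Vert x\Vert^2\Vert y\Vert^2+4\Vert x\Vert^2x^\tran y$ with $y$ collecting only the terms whose conditional mean vanishes. The key structural fact is that the consensus projection of the bias, $\bar{\bm{b}}_{1,i+1}\propto\frac{1}{K}\sum_k\bm{\mu}_{k,i}\nabla_{w^\tran}J_k(w^o)$, is zero-mean over the random activations precisely because $\sum_k q_k\nabla J_k(w^o)=0$, i.e., by the very definition of $w^o$ in \eqref{eq:optMod}; hence it enters the weakly contracting ($1-O(\mu)$) consensus direction only through its variance $O(\mu^2)$ and fourth moment $O(\mu^4)$. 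Meanwhile the residual component $\check{\ws}_i$ does see an $O(\mu)$-mean bias, but it contracts at the constant rate $\rho(J_\epsilon)+O(\mu^2)<1$, so that contribution remains $O(\mu^4)/O(1)$. Without this routing — the $O(1)$ part of $b$ into the strongly contracting subspace and only the activation-zero-mean part into the weakly contracting one — the $O(\mu^2)$ bound cannot be extracted, so your proposal as written does not establish the theorem.
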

\begin{proof}
    The proof is found in Appendix \ref{app:thrm4thStab}.
\end{proof}

\section{Performance Analysis}
\label{sec-msd}
{Having established the mean-square-error stability of the proposed Algorithm \ref{alg:assynATC}, we now proceed to derive an expression for its steady-state mean-square-deviation (MSD) performance. This analysis will enable us to evaluate how the hyperparameters of the proposed algorithm—specifically, the number of local updates \(T\) and the participation probability \(q_k\)—influence its efficacy in learning the model. We will further extend the derived MSD expression to other federated learning algorithms, providing precise estimates of their steady-state error magnitudes.}


{
\subsection{Mean-square-deviation Metric}
In Theorem \ref{thrm:2ndStab}, we establish that Algorithm \ref{alg:assynATC} converges with sufficiently small step-size $\mu$, and the size of the steady-state error is on the order of $O(\mu)$. We now pursue a closed-form expression for algorithmic performance by using mean-square-deviation (MSD) defined as follows \cite[Chapter 11]{sayed2014adaptation}: 
\begin{align}\label{eq:MSD-expr}
    \mathrm{MSD} \eqdef \mu \Big( \lim_{\mu \to 0} \limsup_{i\to \infty} \frac{1}{\mu K} \sum_{k=1}^K \mathbb{E}\|\we_{k,iT}\|^2 \Big).
\end{align}
This MSD expression provides a precise estimate of the steady-state error, rather than just its order. While the MSD expression is well-established for classical decentralized learning algorithms—such as diffusion \cite{chen2012diffusion, sayed2014adaptive, sayed2014adaptation}, asynchronous diffusion \cite{zhao2014asynchronous}, and exact diffusion \cite{yuan2020influence}—it has not yet been developed for federated learning algorithms that incorporate local updates within their framework. In this way, the analysis in this paper leads to what is apparently the first MSD expression for learning algorithms that incorporate local updates and partial agent participation.  }

\subsection{Long-term Model}
We start by showing that it is sufficient to introduce and examine the evolution of a long-term model in place of the original recursion \eqref{eq:errRec}. For this purpose, we first introduce the error Hessian matrix:
\begin{align}
    \widetilde{\bm{\mathcal{H}}}_{iT+t-1} &\eqdef \mathcal{H} - \Hi, \\
    \mathcal{H} &\eqdef \diag{H_k}, \\
    H_k &\eqdef \nabla_{w}^2 J_k(w^o),
\end{align}
and rewrite the original error recursion \eqref{eq:errRec} as:
\begin{align}
    \wse_{iT+t} = & \A (I - \M \mathcal{H}) \wse_{iT+t-1} - \A \M b 
    \notag \\
    & + \A \M \bm{s}_{iT+t} + \A \M \bm{c}_{iT+t-1},
\end{align}
where we are introducing the term:
\begin{align}
    \bm{c}_{iT+t-1} \eqdef \widetilde{\bm{\mathcal{H}}}_{iT+t-1}  \wse_{iT+t-1}.
\end{align}
By using Assumption \ref{assum:hess}, we have:
\begin{align}
    \Vert \widetilde{\bm{\mathcal{H}}}_{iT+t-1}\Vert \leq \frac{ \kappa}{2}\Vert \wse_{iT+t-1}\Vert,
\end{align}
and using Theorem \ref{thrm:2ndStab} we can verify that:
\begin{align}
    \limsup_{i\to\infty} \mathbb{E} \Vert \bm{c}_{iT+t-1}\Vert \leq O(\mu).
\end{align}
As such, we can use the following long-term model to approximate the dynamics of the original recursion \eqref{eq:errRec}:
\begin{align}\label{eq:longTermRec}
    \wse'_{iT+t} = & \A^\tran (I - \M \mathcal{H}) \wse'_{iT+t-1} - \A^\tran \M b 
    \notag \\
    & + \A^\tran \M \bm{s}_{iT+t} 
\end{align}
This model differs from \eqref{eq:errRec} in that it replaces $\bm{\mathcal{H}}_{iT+t-1}$ by the constant matrix $\mathcal{H}$. The following result justifies our reliance on \eqref{eq:longTermRec} to establish the MSD expression.
\begin{theorem}[\sc{Approximation error}]\label{thrm:approxErr}
The trajectories of the original error recursion \eqref{eq:errRec} and the long-term model \eqref{eq:longTermRec} are asymptotically close for small enough step-size $\mu$ and under Assumption \ref{assum:hess}, namely:
\begin{align}
    &\limsup_{i\to\infty} \mathbb{E} \Vert \wse_{iT+t} - \wse'_{iT+t}\Vert^2 = O(\mu^2), \\
    &\limsup_{i \to \infty} \mathbb{E} \Vert \wse_{iT+t}\Vert^2 \hspace{-0.5mm}=\hspace{-0.5mm}\limsup_{i \to \infty} \mathbb{E} \Vert \wse'_{iT+t}\Vert^2 \hspace{-0.5mm}+\hspace{-0.5mm} O(\mu^{3/2}).
\end{align}
\end{theorem}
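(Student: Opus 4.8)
The plan is to track the difference process $\bm{e}_{iT+t} \eqdef \wse_{iT+t} - \wse'_{iT+t}$ and show that it is driven \emph{only} by the Hessian-perturbation term. Recalling that $\bm{A}_{iT+t}$ is symmetric and doubly stochastic for every realization (Lemma~\ref{lemm:topo}), we have $\A = \A^{\tran}$, so the rewritten form of \eqref{eq:errRec} and the long-term model \eqref{eq:longTermRec} share the same linear operator $\A(I - \M\hess)$, the same bias term $\A\M b$, and the same gradient-noise term $\A\M\bm{s}_{iT+t}$, all evaluated on the same random quantities. Subtracting \eqref{eq:longTermRec} from the rewritten \eqref{eq:errRec}, the bias and noise contributions cancel exactly, leaving the purely homogeneous perturbed recursion $\bm{e}_{iT+t} = \A(I - \M\hess)\,\bm{e}_{iT+t-1} + \A\M\bm{c}_{iT+t-1}$, where $\bm{c}_{iT+t-1} = \widetilde{\bm{\mathcal{H}}}_{iT+t-1}\wse_{iT+t-1}$. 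The problem thus reduces to bounding the steady-state energy of a mean-square-stable recursion driven by a small forcing term.

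First I would quantify the forcing term. Using the Lipschitz-Hessian bound $\Vert \widetilde{\bm{\mathcal{H}}}_{iT+t-1}\Vert \le (\kappa/2)\Vert\wse_{iT+t-1}\Vert$ from Assumption~\ref{assum:hess}, we obtain $\Vert\bm{c}_{iT+t-1}\Vert \le (\kappa/2)\Vert\wse_{iT+t-1}\Vert^2$, hence $\mathbb{E}\Vert\bm{c}_{iT+t-1}\Vert^2 \le (\kappa^2/4)\,\mathbb{E}\Vert\wse_{iT+t-1}\Vert^4$. Invoking the fourth-order stability bound of Theorem~\ref{thrm:4thStab} (propagated through the $T-1$ local steps so that it holds uniformly in $t$), this quantity is $O(\mu^2)$. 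Since $\Vert\A\Vert \le 1$ and $\Vert\M\Vert \le \mu$ for every realization, the forcing term satisfies $\mathbb{E}\Vert\A\M\bm{c}_{iT+t-1}\Vert^2 \le \mu^2\,\mathbb{E}\Vert\bm{c}_{iT+t-1}\Vert^2 = O(\mu^4)$.

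Next I would iterate the contraction. The homogeneous operator $\A(I - \M\hess)$ is exactly the one driving the long-term model, so its block-level mean-square contraction—namely $\mathbb{E}[\Vert \bm{\Phi}_i x\Vert^2] \le (1 - O(\mu))\Vert x\Vert^2$ for the one-block transition $\bm{\Phi}_i$—is the property already established in the stability analysis of Theorem~\ref{thrm:2ndStab}. Collecting the forcing terms over one block into $\bm{d}_i$ (a sum of $T$ terms, each of mean-square size $O(\mu^4)$, so that $\mathbb{E}\Vert\bm{d}_i\Vert^2 = O(\mu^4)$), splitting the squared norm by Jensen's inequality with parameter $\theta = O(\mu)$, and applying the contraction yields $\mathbb{E}\Vert\bm{e}_{(i+1)T}\Vert^2 \le (1 - O(\mu))\,\mathbb{E}\Vert\bm{e}_{iT}\Vert^2 + O(1/\mu)\cdot O(\mu^4)$. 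Summing the resulting geometric recursion gives $\limsup_{i\to\infty}\mathbb{E}\Vert\bm{e}_{iT}\Vert^2 \le O(\mu^3)/O(\mu) = O(\mu^2)$; since each intra-block step $I - \M\hess$ and each combination $\A$ is non-expansive for small $\mu$, the same $O(\mu^2)$ bound propagates to every intermediate index $iT+t$, establishing the first claim.

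Finally, the second claim follows by comparison. Writing $\Vert\wse_{iT+t}\Vert^2 = \Vert\wse'_{iT+t}\Vert^2 + 2(\wse'_{iT+t})^{\tran}\bm{e}_{iT+t} + \Vert\bm{e}_{iT+t}\Vert^2$ and taking expectations, Cauchy--Schwarz bounds the cross term by $\sqrt{\mathbb{E}\Vert\wse'_{iT+t}\Vert^2}\,\sqrt{\mathbb{E}\Vert\bm{e}_{iT+t}\Vert^2}$. Since the long-term model \eqref{eq:longTermRec} is mean-square stable with $\mathbb{E}\Vert\wse'_{iT+t}\Vert^2 = O(\mu)$ (by the same argument as Theorem~\ref{thrm:2ndStab}, specialized to the constant Hessian $\hess$) and $\mathbb{E}\Vert\bm{e}_{iT+t}\Vert^2 = O(\mu^2)$ from the first claim, the cross term is $O(\mu^{1/2})\cdot O(\mu) = O(\mu^{3/2})$ and the quadratic term is $O(\mu^2)$, so the two steady-state energies differ by $O(\mu^{3/2})$. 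The main obstacle is the block-level contraction invoked above: because inactive agents carry a zero step-size and the intra-block combination matrix is the identity, the per-step operator is \emph{not} itself contractive, and the decay must be extracted at the block level from the expected doubly-stochastic structure of Lemma~\ref{lemm:topo} together with the energy argument underlying Theorem~\ref{thrm:2ndStab}.
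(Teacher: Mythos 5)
Your proposal is correct and its overall skeleton matches the paper's: form the difference recursion $\bm{e}_{iT+t} = \A(I-\M\hess)\bm{e}_{iT+t-1} + \A\M\bm{c}_{iT+t-1}$ (bias and noise cancel since both recursions are driven by the same realizations and $\A=\A^\tran$ by symmetry), bound the forcing term by $O(\mu^4)$ via the Lipschitz-Hessian bound and the fourth-order stability of Theorem~\ref{thrm:4thStab}, iterate a block-level contraction to get $O(\mu^3)/O(\mu)=O(\mu^2)$, and finish the second claim with the Cauchy--Schwarz cross-term giving $O(\mu^{1/2})\cdot O(\mu)=O(\mu^{3/2})$. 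Where you genuinely diverge is the contraction step: the paper never proves a full-vector mean-square contraction of the one-block operator. Instead it left-multiplies by $\bm{\mathcal{V}}_{(i+1)T}^\tran$ (Jordan coordinates), splits into centroid and residual components $(\bar{\bm z}_i,\check{\bm z}_i)$, derives the coupled pair of inequalities with rates $1-O(\mu)$ and $\rho(J_\epsilon)+O(\mu^2)$, and inverts the resulting $2\times 2$ comparison matrix. Your direct route — $\mathbb{E}\Vert\bm{\mathcal{A}}(I-\M\hess)^T x\Vert^2 \le \mathbb{E}\Vert(I-\M\hess)^T x\Vert^2 \le \max_k\big(1-q_k(1-(1-\mu\nu)^{2T})\big)\Vert x\Vert^2 = (1-O(\mu))\Vert x\Vert^2$ — is valid, but only because Assumption~\ref{assum:fct} makes \emph{each} $J_k$ $\nu$-strongly convex and each $q_k>0$, so every agent contracts in expectation without any help from the network; the doubly stochastic $\bm{\mathcal{A}}$ contributes only non-expansiveness. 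This buys you a shorter argument that avoids the coordinate transformation entirely, whereas the paper's decomposition is the route one would need if strong convexity held only for the aggregate risk rather than per agent. One caveat: you attribute this full-vector contraction to the "stability analysis of Theorem~\ref{thrm:2ndStab}," but no such statement appears there (and your closing paragraph concedes as much); in a final write-up you should state and prove it as a short standalone lemma along the lines above, rather than cite it, since as phrased this is the one load-bearing step of your plan that is asserted rather than established.
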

\begin{proof}
    The proof can be found in Appendix \ref{app:thrmApproxErr}.
\end{proof}
We next show that the long-term model is stable. 
\begin{theorem}[\sc{Long-term model stability}]\label{thrm:stabLongTerm}
    Under Assumptions \ref{assum:combMat} and \ref{assum:fct}, and for a small enough step-size $\mu$, it holds for every agent $k=1,2,\cdots, K$:
    \begin{align}
        \limsup_{i\to\infty} \mathbb{E}\Vert \we'_{k,iT+t}\Vert^2 = O(\mu).
    \end{align}
\end{theorem}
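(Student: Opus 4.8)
The cleanest route is to reduce the claim to results already in hand. The long-term recursion \eqref{eq:longTermRec} carries the \emph{same} gradient noise $\bm{s}_{iT+t}$, the same bias $b$, and the same step-size matrix $\M$ as the true error recursion \eqref{eq:errRec}; moreover, since every sampled combination matrix is symmetric we have $\bm{\mathcal{A}}_{iT+t}^\tran = \bm{\mathcal{A}}_{iT+t}$, so the two recursions differ \emph{only} in that the fluctuating matrix $\Hi$ is replaced by the constant $\hess = \diag{H_k}$, with $\nu I \preceq \hess \preceq \delta I$ by Assumption \ref{assum:fct}. Theorem \ref{thrm:approxErr} already quantifies the resulting gap, giving $\limsup_{i}\mathbb{E}\Vert \wse_{iT+t}\Vert^2 = \limsup_{i}\mathbb{E}\Vert \wse'_{iT+t}\Vert^2 + O(\mu^{3/2})$. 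Hence it suffices to establish $\limsup_i \mathbb{E}\Vert \wse_{iT+t}\Vert^2 = O(\mu)$ for every intra-block index $t$, after which $\limsup_i\mathbb{E}\Vert \wse'_{iT+t}\Vert^2 = O(\mu)$ and $\mathbb{E}\Vert \we'_{k,iT+t}\Vert^2 \le \mathbb{E}\Vert \wse'_{iT+t}\Vert^2 = O(\mu)$ follow at once.

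Theorem \ref{thrm:2ndStab} already supplies $\limsup_i\mathbb{E}\Vert \wse_{iT}\Vert^2 = O(\mu)$ at the block boundaries. To cover $t = 1,\dots, T-1$ I would propagate this bound through the $t$ purely-local steps of block $i$, during which $\bm{\mathcal{A}}_{iT+t} = I$ and the map is $\wse_{iT+\tau} = (I - \M\Hi)\wse_{iT+\tau-1} - \M b + \M\bm{s}_{iT+\tau}$. Each such step is a near non-expansion: $\Vert I - \M\Hi\Vert \le 1$ for small $\mu$, the bias obeys $\Vert \M b\Vert = O(\mu)$, and the noise contributes variance $O(\mu^2)$ by Lemma \ref{lemm:gradNoise} together with the $O(\mu)$ bound already available for $\wse_{iT}$. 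Accumulating over at most $T$ steps multiplies the energy by $1 + O(\mu)$ and adds $O(\mu^2)$, which preserves the $O(\mu)$ order and completes the reduction above.

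Should one prefer a self-contained argument (not leaning on Theorem \ref{thrm:approxErr}), I would replay the proof of Theorem \ref{thrm:2ndStab} verbatim with $\hess$ in place of $\Hi$, which can only simplify matters since $\hess$ is deterministic. Collapsing a block into $\wse'_{(i+1)T} = \mathcal{B}_i\,\wse'_{iT} - \bm{d}_i + \bm{n}_i$, with block operator $\mathcal{B}_i \eqdef \bm{\mathcal{A}}_{(i+1)T}^\tran (I-\M\hess)^T$, conditionally zero-mean noise $\bm{n}_i$, and accumulated bias $\bm{d}_i$, the activation pattern of block $i$ is independent of $\mathcal{F}_{iT}$, so $\mathbb{E}[\Vert \wse'_{(i+1)T}\Vert^2 \mid \mathcal{F}_{iT}]$ is controlled by $(\wse'_{iT})^\tran \mathbb{E}[\mathcal{B}_i^\tran\mathcal{B}_i]\,\wse'_{iT}$ plus bias and variance terms. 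Bounding $\bm{\mathcal{A}}_{(i+1)T}^\tran\bm{\mathcal{A}}_{(i+1)T}\preceq I$ pointwise and then averaging the block-diagonal matrix $\mathbb{E}[(I-\M\hess)^{2T}]$, whose $k$-th block equals $q_k(I-\mu H_k)^{2T} + (1-q_k)I$, yields $\mathbb{E}[\mathcal{B}_i^\tran\mathcal{B}_i] \preceq (1 - 2T\nu\mu\min_k q_k + O(\mu^2))\, I$, an $O(\mu)$ mean-square contraction that requires $\min_k q_k > 0$.

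The subtle point, and the step I expect to be the main obstacle, is the bias. Treated crudely, $\bm{d}_i = O(\mu)$ set against an $O(\mu)$ contraction would spuriously inflate the steady state to $O(1)$. The resolution is to separate the Perron (consensus) mode from its complement: the bias has \emph{no} consensus component in expectation because $w^o$ in \eqref{eq:optMod} satisfies $\tfrac1K\sum_k q_k\nabla_{w^\tran} J_k(w^o) = 0$, so $\bm{d}_i$ drives only the disagreement subspace, where the expected topology $\bar{A}_{(i+1)T}$ of Lemma \ref{lemm:topo} supplies an $O(1)$ spectral-gap contraction rather than an $O(\mu)$ one, holding its contribution to $O(\mu^2)$. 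Combined with the $O(\mu^2)$ noise variance, the block energy recursion reads $W_{(i+1)T} \le (1 - O(\mu))\, W_{iT} + O(\mu^2)$ with $W_{iT} = \mathbb{E}\Vert \wse'_{iT}\Vert^2$, whose iteration gives $\limsup_i W_{iT} = O(\mu)$; propagating through the local steps as in the second paragraph extends this to all $t$. Carrying out this mode separation while the combination matrix and the step-size matrix are random, correlated, and mixing only once per block is the genuinely delicate part of the estimate.
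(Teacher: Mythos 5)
Your proposal is correct, and your primary route is genuinely different from the paper's. The paper proves Theorem \ref{thrm:stabLongTerm} in a self-contained way, exactly along your fallback Route 2: it transforms \eqref{eq:longTermRec} by the Jordan-decomposition matrix $\bm{\mathcal{V}}_{(i+1)T}^\tran$ used for Theorem \ref{thrm:2ndStab}, writes out the block operator entries $\bm{B}'_{11,i+1},\dots,\bm{B}'_{22,i+1}$ (identical in structure to those of Theorem \ref{thrm:2ndStab} with $\Hi$ replaced by the constant $\hess$), re-derives the same bias and gradient-noise bounds, and iterates the coupled $(\bar{\ws}',\check{\ws}')$ energy recursion; in particular, the delicate point you flag---that the consensus-mode bias must vanish in conditional expectation by the definition of $w^o$ in \eqref{eq:optMod}, lest the steady state inflate to $O(1)$---is precisely the $\bar{\bm{b}}_{1,i+1}/\bar{\bm{b}}_{2,i+1}$ splitting of Appendix \ref{app:thrm2ndStab} that Appendix \ref{app:thrmStabLongTerm} inherits. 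Your primary route instead reduces the claim to Theorem \ref{thrm:approxErr}, and this is legitimate: that theorem's proof relies only on Theorems \ref{thrm:2ndStab} and \ref{thrm:4thStab}, so no circularity arises; indeed its first assertion alone suffices, via $\mathbb{E}\Vert\wse'_{iT+t}\Vert^2 \le 2\,\mathbb{E}\Vert\wse_{iT+t}-\wse'_{iT+t}\Vert^2 + 2\,\mathbb{E}\Vert\wse_{iT+t}\Vert^2 = O(\mu^2)+O(\mu)$, without even needing the $O(\mu^{3/2})$ relation. The only real work left is extending the bound of Theorem \ref{thrm:2ndStab} from block boundaries to intra-block indices, which your non-expansiveness propagation handles correctly (minor quibble: the per-step additive term is $O(\mu^{3/2})$, coming from the cross term between the $O(\sqrt{\mu})$ error and the $O(\mu)$ bias, rather than $O(\mu^2)$; over $T$ fixed steps the $O(\mu)$ order survives either way). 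As for what each approach buys: your reduction is shorter, reuses established results instead of duplicating the block analysis, and is more careful than the paper about the intra-block index $t$ (the paper's appendix, like its proof of Theorem \ref{thrm:2ndStab}, works only at $t=T$ while the theorem statement quantifies over all $t$); the paper's self-contained replay keeps Theorem \ref{thrm:stabLongTerm} logically independent of the approximation theorem and produces the explicit operators and contraction factors $\bar{\gamma}',\check{\gamma}'$ that the MSD derivation in Appendix \ref{app:thrmMSD} subsequently builds on. Your Route 2 observation that a crude pointwise bound needs $\min_k q_k>0$ is also apt: the paper never states this condition, yet something of the sort is implicitly required, since an agent with $q_k=0$ never updates and its error cannot contract.
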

\begin{proof}
    The proof can be found in Appendix \ref{app:thrmStabLongTerm}.
\end{proof}

\subsection{MSD Expression}
For the subsequent discussion, we require the following assumption on the gradient process. 
\begin{assumption}[\sc{Noise process}]\label{assum:noiseProc}
    Define the covariance of the gradient noise:
    \begin{align}
        R_{k,iT+t} (w) \eqdef \mathbb{E} \bm{s}_{k,iT+t}(w)\bm{s}^\tran_{k,iT+t}(w).
    \end{align}
    Then, for some positive constant $\kappa_s$ and $\alpha_s$, the covariance satisfies the following Lipschitz condition: 
    \begin{align}
        \Vert \mbox{\rm diag} \left\{R_{k,iT+t}(w^o)-R_{k,iT+t}(\w_{k,iT+t}) \right\} \Vert \leq \kappa_s \Vert \wse_{iT+t}\Vert^{\alpha_s}. 
    \end{align}
    Moreover, the following limit exists:
    \begin{align}
        R_k \eqdef \lim_{i\to \infty}R_{k,iT+t}(w^o). 
    \end{align}
    \qed
\end{assumption}

With this assumption in addition to the smoothness assumption on the Hessian, we can find an expression for the MSD.

\begin{theorem}[\sc{Steady-state MSD}]\label{thrm:MSD}
    Under Assumptions \ref{assum:hess} and \ref{assum:noiseProc}, it holds that:
    \begin{align}
        \mbox{\rm MSD} &\eqdef \lim_{i\to \infty}\frac{1}{K}\sum_{k=1}^K \mathbb{E} \Vert \we_{k,(i+1)T}\Vert^2 
    \notag \\
    &= \frac{1}{K}z^\tran {\rm bvec}(I_{MK}) + O(\mu^{\alpha}),  \label{eq:MSD-expre}
    \end{align}
    where $\alpha \eqdef 1 + \frac{1}{2}\min \{1,\alpha_s\}$ and $z$ is defined in \eqref{eq:defZ} in terms of the gradient noise and the bias; it is also approximated in \eqref{zuqn} and \eqref{b2987a}.
\end{theorem}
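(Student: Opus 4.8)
The plan is to carry out the entire computation on the long-term model \eqref{eq:longTermRec} rather than on the original recursion \eqref{eq:errRec}. This substitution is licensed by Theorem~\ref{thrm:approxErr}, which gives $\limsup_i \mathbb{E}\Vert\wse_{iT+t}\Vert^2 = \limsup_i \mathbb{E}\Vert\wse'_{iT+t}\Vert^2 + O(\mu^{3/2})$; since the claimed exponent satisfies $\alpha \le 3/2$, any MSD computed for \eqref{eq:longTermRec} is correct up to $O(\mu^\alpha)$. First I would unroll \eqref{eq:longTermRec} across one full block of $T$ local updates. Because $\M$ is constant throughout block $i$ and the combination factor equals $I$ for $t=1,\dots,T-1$ and equals the random $\mathcal{A}^{\tran}$ only at $t=T$, the composition collapses into a single block recursion
\begin{align}
\wse'_{(i+1)T} = \mathcal{B}_i\,\wse'_{iT} + \sum_{t=1}^{T}\mathcal{G}_{i,t}\,\M\big(\bm{s}_{iT+t}-b\big), \nonumber
\end{align}
where $\mathcal{B}_i \eqdef \mathcal{A}^{\tran}(I-\M\hess)^T$ and $\mathcal{G}_{i,t}$ is the partial product of the factors $(I-\M\hess)$ and $\mathcal{A}^{\tran}$ that propagates the step-$t$ injection to the end of the block.

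Next I would propagate the second moment $W_i \eqdef \mathbb{E}[\wse'_{iT}(\wse'_{iT})^{\tran}]$ by squaring the block recursion and taking expectations. Two structural facts drive the simplification: (i) the block randomness $(\M,\mathcal{A}^{\tran})$ and the noises $\{\bm{s}_{iT+t}\}_t$ of block $i$ are independent of $\wse'_{iT}$, which is measurable with respect to the past; and (ii) by Lemma~\ref{lemm:gradNoise} the noise is zero-mean and uncorrelated across $t$. Consequently the noise enters only through $\sum_t \mathbb{E}[\mathcal{G}_{i,t}\M R_{iT+t}\M\mathcal{G}_{i,t}^{\tran}]$ with $R$ the gradient-noise covariance, the deterministic bias $b$ contributes an outer-product term, and there remains a cross term coupling the propagated mean $\mathbb{E}[\wse'_{iT}]$ with $b$ (the weighted-average bias $\sum_k q_k\nabla J_k(w^o)=0$ vanishes, so $b$ feeds only the disagreement and stays $O(\mu)$). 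Evaluating the expectations of the combination-matrix products through Lemma~\ref{lemm:topo}, in particular \eqref{q23zand210}, turns this into a closed linear recursion $W_{i+1}=\mathcal{L}(W_i)+Y$ with linear operator $\mathcal{L}$ and constant driving term $Y=O(\mu^2)$ gathering the noise covariance and the bias.

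I would then vectorize using the block-Kronecker calculus: with $\mathrm{bvec}(\mathcal{X}\Sigma\mathcal{Y})=(\mathcal{Y}^{\tran}\otimes_b\mathcal{X})\,\mathrm{bvec}(\Sigma)$, the operator $\mathcal{L}$ becomes a matrix $\mathcal{F}=\mathbb{E}[\mathcal{B}_i\otimes_b\mathcal{B}_i]$ up to lower-order corrections, and the steady state $z\eqdef \lim_i \mathrm{bvec}(W_i)$ solves $(I-\mathcal{F})z=y$, i.e. $z=(I-\mathcal{F})^{-1}y$. Invertibility of $I-\mathcal{F}$ for small $\mu$ follows from the spectral contraction already established in Theorem~\ref{thrm:stabLongTerm}; this identifies $z$ with the quantity of \eqref{eq:defZ}. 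The MSD is finally recovered through the trace identity $\tfrac{1}{K}\mathrm{Tr}(W_\infty)=\tfrac{1}{K}\,\mathrm{bvec}(I_{MK})^{\tran}z$, which is exactly \eqref{eq:MSD-expre}.

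The error terms must then be accounted for. Two replacements produce the residual $O(\mu^\alpha)$: passing to the long-term model costs $O(\mu^{3/2})$ by Theorem~\ref{thrm:approxErr}, while replacing the running covariance $R_{k,iT+t}(\w_{k,iT+t})$ by its limit $R_k=R_{k,iT+t}(w^o)$ is controlled by the Lipschitz bound of Assumption~\ref{assum:noiseProc}, $\Vert R(w^o)-R(w)\Vert\le\kappa_s\Vert\wse\Vert^{\alpha_s}$, together with $\mathbb{E}\Vert\wse\Vert^{\alpha_s}=O(\mu^{\alpha_s/2})$ from the moment bounds of Theorems~\ref{thrm:2ndStab} and~\ref{thrm:4thStab}; this injects $O(\mu^{2+\alpha_s/2})$ into $Y$ and hence $O(\mu^{1+\alpha_s/2})$ into the MSD after dividing by $I-\mathcal{F}=O(\mu)$. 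Combining, the total error is $O(\mu^{\min\{3/2,\,1+\alpha_s/2\}})=O(\mu^{\,1+\frac12\min\{1,\alpha_s\}})=O(\mu^{\alpha})$. I expect the main obstacle to be the bookkeeping of this second-moment recursion: correctly accumulating the partial-product propagators $\mathcal{G}_{i,t}$, disentangling the coupling between the random block matrices and the deterministic bias $b$ in the cross terms, and computing the block-Kronecker expectations $\mathbb{E}[\mathcal{B}_i\otimes_b\mathcal{B}_i]$ via Lemma~\ref{lemm:topo} so that the resulting $y$ and $\mathcal{F}$ reproduce the closed form $z$ of \eqref{eq:defZ} and its approximations \eqref{zuqn} and \eqref{b2987a}.
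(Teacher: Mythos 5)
Your proposal follows essentially the same route as the paper's proof: work on the long-term model \eqref{eq:longTermRec}, unroll one block of $T$ updates, propagate the second moment using zero-mean/independent noise, vectorize with block-Kronecker products to obtain $z=(I-\mathcal{G})^{-1}y_\infty$, replace the running covariance by $R_k$ via the Lipschitz condition of Assumption~\ref{assum:noiseProc} together with fourth-order stability, and combine the $O(\mu^{3/2})$ cost of Theorem~\ref{thrm:approxErr} with the $O(\mu^{1+\alpha_s/2})$ covariance-substitution cost to get $O(\mu^\alpha)$. The structure, key lemmas, and error bookkeeping all match the paper's argument.
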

\begin{proof}
    The proof can be found in Appendix \ref{app:thrmMSD}.
\end{proof}
\begin{remark}[\sc Influence of local updates and activation probability]
By examining the expression for $z$ in \eqref{eq:defZ} we can see that it is proportional to 
\begin{align}\label{zuqn}
   z &\propto \mu T (I-\mu \mathcal{H})^{2T-2} \Big( b\otimes_b b  + {\rm bvec}\left(\diag{R_k} \right) \Big)
   \notag \\ &\quad 
   + O(\mu) (I - \mu \mathcal{H})^{2T-1}.
\end{align}
Thus, for larger $T$, $z$ increases resulting in larger MSD. Additionally, by setting $T=1$, we can see that $z$ is proportional to the activation probabilities:
\begin{align}\label{b2987a}
    z &\propto \left( I - (1-O(\mu))^2\mathbb{E} \bm{\mathcal{A}}_i \otimes \bm{\mathcal{A}}_i \right)^{-1} O(\mu^2) \mathbb{E} \bm{\mathcal{A}}_i \otimes \bm{\mathcal{A}}_i 
    \notag \\ &\quad \times
    \left( b\otimes_b b + {\rm bvec}(\diag{R_k}) \right).
\end{align}
Additionally, the gradient noise decreases as the activation probabilities increase, resulting in a smaller MSD.
\end{remark}

\section{Experimental Analysis}
We consider a network of $K=20$ agents (Fig. \ref{fig:Net}) whose goal is to solve a linear regression problem. Every agent $k$ has a dataset that consists of $N = 100$ input vectors $\bm{u}_{k,n} \in \mathbb{R}^2$ that are normally distributed with covariance matrix $R_{u}$ and varying mean. We assume the output $\bm{d}_k(n)$ satisfies the following relation:
\begin{align}
    \bm{d}_{k}(n) = \bm{u}_{k,n}^\tran w^{\star} + \bm{v}_{k}(n),
\end{align}
where $w^{\star} \in \mathbb{R}^2$ is some generative model and $\bm{v}_{k}(n)$ is random zero-mean Gaussian noise with $\sigma_{k,v}^2$ variance and is independent of the input $\bm{u}_{k,n}$. To ensure non-IID data, we enforce varying input means and noise variances. We define the following regularized optimization problem:
\begin{align}
    \min_{w\in\mathbb{R}^2} \frac{1}{K N} \sum_{k=1}^K \sum_{n=1}^K \vert \bm{d}_{k}(n) - \bm{u}_{k,n}^\tran w \vert^2 + \rho \Vert w\Vert^2. 
\end{align}
\begin{figure}[t!]
    \centering
    \includegraphics[width=0.3\textwidth]{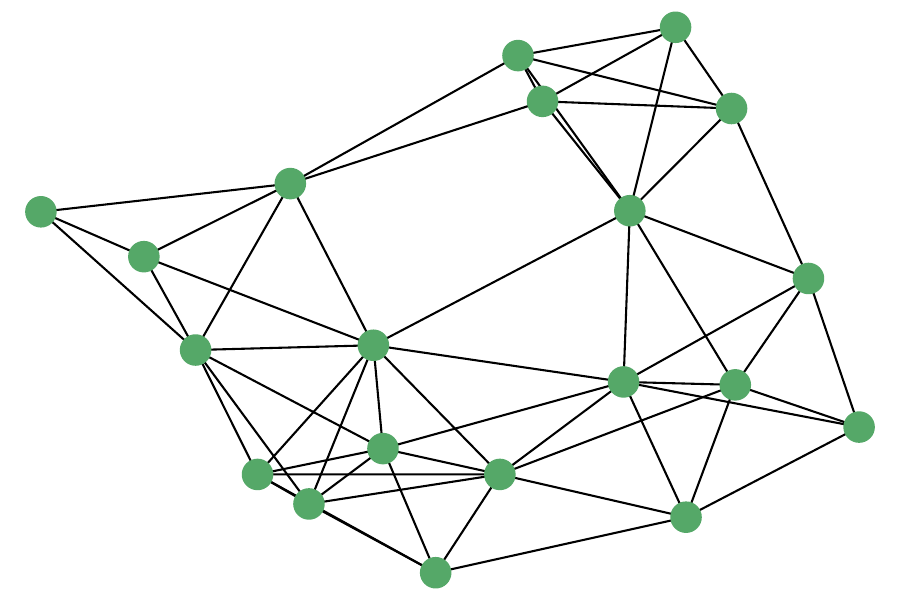}
    \caption{The underlying network.}
    \label{fig:Net}
\end{figure}

We first run Algorithm \ref{alg:assynATC} with local updates and partial agent participation. We set the step-size to $\mu = 0.01$ and the regularization parameter to $\rho = 0.1$. We generate random participation probabilities $q_k$ and set the number of local updates to $T = 5$. We repeat the algorithm 5 times. We calculate the average MSD of the individual agents over the total number of agents and the 5 passes.
The resulting learning curve is plotted in Fig. \ref{fig:MSD-nonIID}, and we observer that it matches the theoretical MSD expression established in Theorem \ref{thrm:MSD}. 
\begin{figure}[h!]
    \centering
    \includegraphics[width=0.4\textwidth]{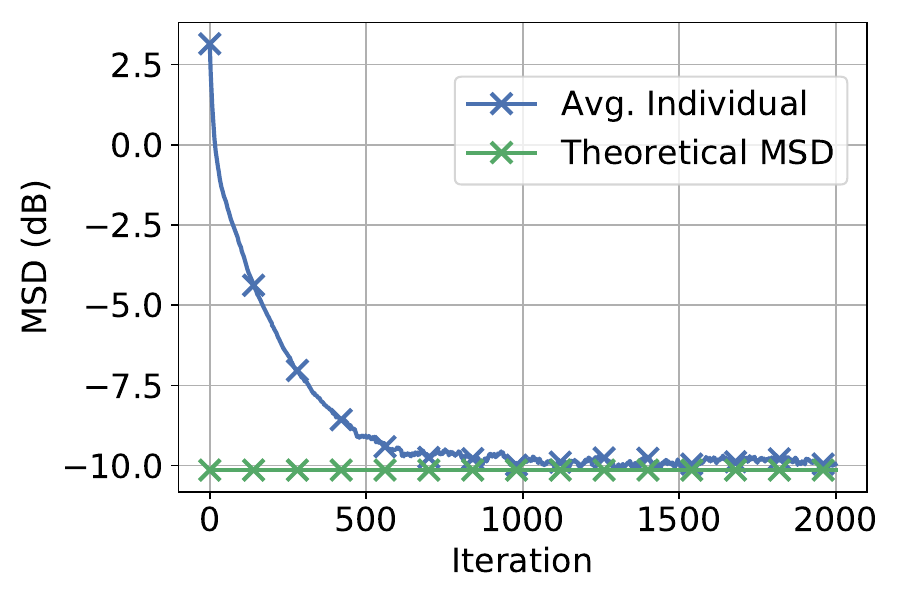}
    \caption{The steady-state performance of Algorithm \ref{alg:assynATC} with local updates and partial agent participation matches with the theoretical MSD expression \eqref{eq:MSD-expre} established in Theorem \ref{thrm:MSD}.}
    \label{fig:MSD-nonIID}
\end{figure}

\subsection{Effect of Partial Agent Participation}
We assume agents run only one local update and vary the activation probabilities to study its effect on algorithm performance. We set all the probabilities $q_k$ to the same value across the agents. We test three cases: a low activation case when $q_k = 0.1$, a medium activation case when $q_k = 0.5$, and a high activation case when $q_k = 0.9$. We plot the average MSD for these three scenarios in Fig. \ref{fig:MSD-prob}. We observe that both convergence rate and performance improve as the activation probabilities increase. This is expected, as higher agent participation probabilities lead to more active agents during a given iteration, resulting in a better gradient estimate. Furthermore, a better gradient estimate during an iteration leads to a smaller converging neighborhood, influenced by the gradient noise variance.
\begin{figure}[h!]
    \centering
    \includegraphics[width=0.4\textwidth]{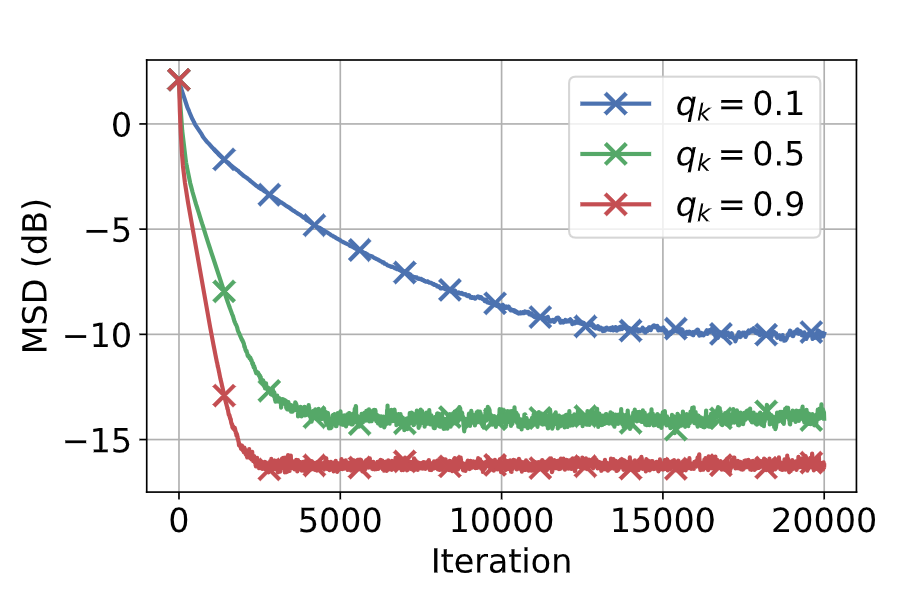}
    \caption{The convergence performance of Algorithm \ref{alg:assynATC} with varying agent activation probabilities.}
    \label{fig:MSD-prob}
\end{figure}

\subsection{Effect of Local Updates}
We assume all agents are active but study the effect of the local updates. Thus, we consider three values of local updates, $T = 2,5,10$. We plot the corresponding MSD curves in Fig. \ref{fig:MSD-locUp}. They converge faster to a worse error as agents perform more local updates. Studying closely the expressions in Theorem \ref{thrm:2ndStab}, we can verify this behavior. As $T$ increases, the bounds on the bias and gradient noise are proportional to $T$ and thus they increase. Furthermore, the rate of convergence is driven by $1-T\sum_{k}q_k O(\mu)$, and therefore as $T$ increases the rate approaches zero resulting in a faster convergence. 

\begin{figure}[h!]
    \centering
    \includegraphics[width=0.4\textwidth]{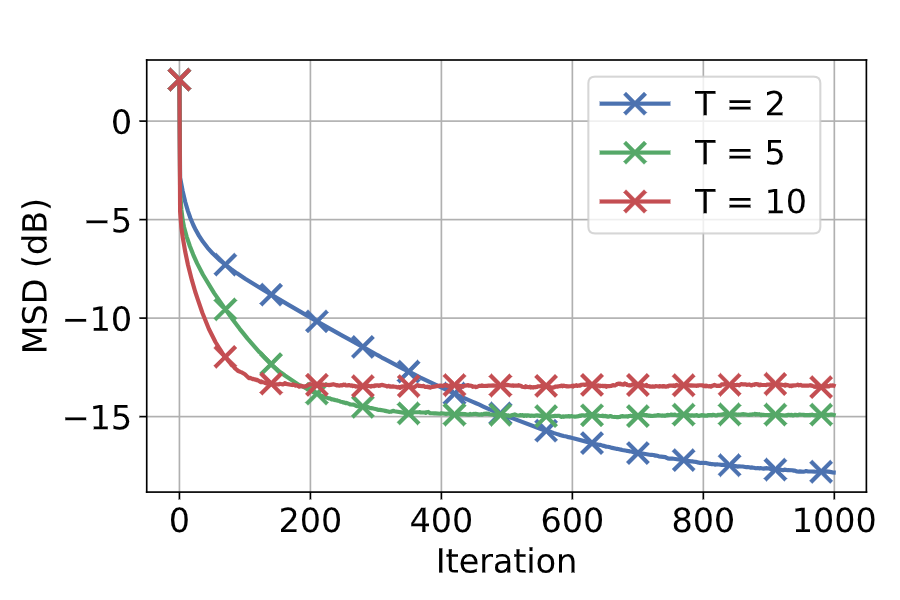}
    \caption{The convergence performance of Algorithm \ref{alg:assynATC} with varying number of local updates.}
    \label{fig:MSD-locUp}
\end{figure}

\section{Conclusion}
This paper investigates a diffusion learning strategy that incorporates local updates and partial agent participation, addressing scenarios with communication constraints and agent unavailability. We demonstrate that the proposed algorithm maintains mean-square error stability and offer a thorough analysis of its MSD performance. Through our numerical experiments, we show that local steps hinder the performance of the algorithm, while higher activation probabilities lead to a speed-up in convergence to a better approximation of the model. 

\appendices

\section{Proof of Lemma \ref{lemm:topo}}\label{app:lemmtopo}
{We first calculate $\bar{A}_{iT+t}$. Rewrite $\bm{a}_{\ell k,iT+t}$ as
\begin{align}
\bm{a}_{\ell k,iT+t} = 
\begin{cases}
a_{\ell k}\cdot\mathbb{I}(\text{\textit{l} is active})\cdot\mathbb{I}(\text{\textit{k} is active}),
& \ell\neq k,\\
1-\sum\limits_{j\neq k}\bm{a}_{\ell j,iT+t},&\ell = k.
\end{cases}
\end{align}
where the operator $\mathbb I(\cdot)$ on a given event $X$ is defined as
\begin{align}
    \mathbb{I}(X)=
    \begin{cases}
    1,& \text{when } X \mbox{ is true}\\
    0,& \text{o.w.}
    \end{cases}
\end{align}
Since agents are independently active with probability $q_{k}$,
\begin{align}
\bar{a}_{\ell k,iT+t}=
\begin{cases}
a_{\ell k}q_{\ell}q_{k},&\ell \neq k,\\
1-\sum\limits_{j\neq k} a_{\ell k }q_{\ell}q_{k},&\ell = k.
\end{cases}
\end{align}
Next, we calculate $\mathbb E \bm{a}_{\ell k,iT+t}\bm{\mu}_{k,i}$. Rewrite $\bm{a}_{\ell k,iT+t}\bm{\mu}_{k,i}$ as 
\begin{align}
&\bm{a}_{\ell k,iT+t}\bm{\mu}_{i}\notag\\
&=
\begin{cases}
a_{\ell k}\mu \mathbb{I}(\text{\textit{l} is active})\cdot\mathbb{I}(\text{\textit{k} is active}),&\ell \neq k\\
\left[ 1-\sum\limits_{j\neq k}a_{\ell j}\mathbb I(\text{\textit{j} is active}) \right]\mu\mathbb I(\text{\textit{k} is active}),&\ell = k.
\end{cases}
\end{align}
Taking expectations, we have:
\begin{align}
\mathbb E\bm{a}_{\ell k,iT+t}\bm{\mu}_{i}
&=
\begin{cases}
a_{\ell k}\mu q_{\ell}q_{k},&\ell \neq k\\
\mu\bar{a}_{kk.iT+t}-\mu(1-q_{k}),&\ell = k.
\end{cases}
\end{align}
With the notation of \eqref{z72bvba}, we arrive at
\begin{align}
\mathbb{E} \A \M &= \mu (\Abar_{iT+t}-I)+\bar{\mathcal M}.
\end{align}
}

\section{Proof of Lemma \ref{lemm:gradNoise}} \label{app:lemmGradNoise}
	We start by showing that the individual gradient noise is zero-mean. If we assume the gradient estimate is some mini-batch estimate, then it is simple to show that the conditional expectation of the gradient noise is zero, which also implies that $\mathbb{E} \{\bm{s}_{iT+t} | \F\} = 0$.

	Next, to show that the second-order moment is bounded, we bound the $\ell_2-$norm by using Jensen's inequality and the Lipschitz condition:
    \begin{align}
    \left\lVert \bm{s}_{k,iT+t} \right\rVert^{2}
    &\leq
    3\left \lVert \grad{w}J_k(\w_{k,iT+t-1}) -  \grad{w}J_k(w^o)\right\rVert^{2} \nonumber 
			 \\
			&\quad +  3\left\lVert \grad{w}J_k(w^o) -  \grad{w}Q_k(w^o;\x_{k,iT+t}) \right\rVert^{2}
					\nonumber	 \\
			&\quad + 3\Big\lVert \grad{w}Q_k(w^o;\x_{k,iT+t})  \nonumber \\
            &\quad \quad- \grad{w}Q_k(\w_{k,iT+t-1};\x_{k,iT+t}) \Big\rVert^{2} \nonumber  \\
    &\leq 6\delta^2 \left\lVert \we_{k,iT+t-1}\right\rVert^2 \nonumber \\
    &\quad+3 \left\lVert  \grad{w}Q_k(w^o;\x_{k,iT+t}) \right\rVert^{2}.
    \end{align}
	Then, by defining the following constants:
	\begin{align}
		\beta_s^2 &\eqdef {6} \delta^2, \\
		\sigma_s^2 &\eqdef {3 \sum_{k=1}^K \mathbb{E} \left\{ \left\lVert  \grad{w}Q_k(w^o;\x_{k,iT+t}) \right\rVert^2  \right\}},
	\end{align}
	we can conclude bounded second-order moment of the individual gradient noise as well as the total gradient noise since:
    {
    \begin{align}
            &\mathbb E\left\lVert \bm{s}_{iT+t} \right\rVert^{2} = \sum\limits_{k=1}^{K}\mathbb E\left\lVert \bm{s}_{k,iT+t} \right\rVert^{2} \nonumber \\
            &\leq \sum\limits_{k=1}^{K}\left(6\delta^{2}\mathbb E\left\lVert \we_{k,iT+t-1} \right\rVert^{2} \hspace{-0.5mm}+\hspace{-0.5mm} 3\mathbb{E} \Vert\grad{w}Q_k(w^o;\x_{k,iT+t})\Vert^2 \right) \nonumber \\
            &= \beta_s^2 \mathbb E\left\Vert \wse_{iT+t-1}\right\Vert^2 + \sigma_s^2.
        \end{align}
    }
\allowdisplaybreaks
\section{Proof of Lemma \ref{lemm:gradNoise4}} \label{app:lemmGradNoise4}
Similar to the proof of Lemma \ref{lemm:gradNoise}, we bound the $\ell_4-$norm by using Jensen's inequality and the Lipschitz condition:
\begin{align}
    \left\lVert \bm{s}_{k,iT+t} \right\rVert^{4}
    &\leq
    64\left \lVert \grad{w}J_k(\w_{k,iT+t-1}) -  \grad{w}J_k(w^o)\right\rVert^{4}
		\nonumber	 \\
			&\quad +  64\left\lVert \grad{w}J_k(w^o) -  \grad{w}Q_k(w^o;\x_{k,iT+t}) \right\rVert^{4}
					\nonumber	 \\
			&\quad + 64\Big\lVert \grad{w}Q_k(w^o;\x_{k,iT+t})\Big\rVert^{4} \nonumber \\
            &\quad + 64\Big\lVert\grad{w}Q_k(\w_{k,iT+t-1};\x_{k,iT+t}) \Big\rVert^{4} \nonumber \\
    &\leq 128\delta^4 \left\lVert \we_{k,iT+t-1}\right\rVert^4 \nonumber \\
    &\quad + 64\Big(\left\lVert  \grad{w}Q_k(w^o;\x_{iT+t}) \right\rVert^{4} \nonumber \\
    &\quad + \left\lVert  \grad{w}Q_k(\w_{k,iT+t-1};\x_{iT+t}) \right\rVert^{4}\Big)
    .
    \end{align}
    Then, by defining the following constants:
	\begin{align}
		\bar{\beta}_s^2 &\eqdef 128K \delta^4, \\
		\sigma_s^2 &\eqdef 64K \sum_{k=1}^K\Big( \mathbb{E} \left\{ \left\lVert  \grad{w}Q_k(w^o;\x_{k,iT+t}) \right\rVert^4  \right\} \nonumber  \\
        &\quad \:+\:\mathbb E\left\lVert  \grad{w}Q_k(\w_{k,iT+t-1};\x_{k,iT+t}) \right\rVert^{4}\Big),
	\end{align}
we can conclude bounded fourth-order moments for the individual gradient noise, as well as for the total gradient noise since:
\begin{align}
    \mathbb E\left\lVert \bm{s}_{iT+t} \right\rVert^{4} & = \mathbb E\left( 
    \sum\limits_{k=1}^{K}\left\lVert \bm{s}_{k,iT+t} \right\rVert^{2}\right)^{2} \nonumber \\
    &\leq K \sum\limits_{k=1}^{K}\mathbb E\left\lVert \bm{s}_{k,iT+t} \right\rVert^{4}\nonumber
    \\
    &\leq 128K\delta^{4}\sum\limits_{k=1}^{K}
    \mathbb E\left\lVert \we_{k,iT+t-1} \right\rVert^{4}\nonumber\\
    &\quad +64K \sum_{k=1}^K\Big( \mathbb{E} \left\{ \left\lVert  \grad{w}Q_k(w^o;\x_{k,iT+t}) \right\rVert^4  \right\}\nonumber\\
    &\quad\quad +\mathbb E\left\lVert  \grad{w}Q_k(\w_{k,iT+t-1};\x_{k,iT+t}) \right\rVert^{4}\Big)\nonumber\\
    &\leq \bar{\beta}_s^4 \mathbb E\left\Vert \wse_{iT+t-1}\right\Vert^4 + \bar{\sigma}_s^4.
\end{align}

\section{Proof of Theorem \ref{thrm:2ndStab}}\label{app:thrm2ndStab}
We start with the network error recursion after all the local steps were conducted during iteration $i$: 
	\begin{align}
		\wse_{(i+1)T} = &\bm{\mathcal{A}}_{(i+1)T}^\tran (I - \M\bm{\mathcal{H}}_{(i+1)T-1}) \wse_{(i+1)T-1}
    \notag \\ &   
  - \bm{\mathcal{A}}_{(i+1)T}^\tran \M b + \bm{\mathcal{A}}_{(i+1)T}^\tran \M \bm{s}_{(i+1)T},
	\end{align}
    and by recursively replacing the error recursion for $t\neq T$:
    \begin{align}
        \wse_{iT+t} = & (I-\M \Hi) \wse_{iT+t-1} - \M b + \M \bm{s}_{iT+t},
    \end{align}
    we can write an error recursion that depends on the global iteration $i$:
    \begin{align}
        &\wse_{(i+1)T} =  \bm{\mathcal{A}}_{(i+1)T}^\tran \prod_{t=1}^T(I - \M\bm{\mathcal{H}}_{(i+1)T-t}) \wse_{iT} 
        \notag \\
        & - \bm{\mathcal{A}}_{(i+1)T}^\tran \left \{ I + \sum_{t=1}^{T-1} \prod_{t'=1}^t (I - \M \bm{\mathcal{H}}_{(i+1)T-t'}  )\right\}\M b
        \notag \\
        & + \bm{\mathcal{A}}_{(i+1)T}^\tran  \M \bm{s}_{(i+1)T} 
        \notag \\ &
        + \bm{\mathcal{A}}_{(i+1)T}^\tran\
         \sum_{t=1}^{T-1} \prod_{t'=1}^t (I - \M \bm{\mathcal{H}}_{(i+1)T-t'}  )\M \bm{s}_{(i+1)T - t}.
    \end{align}
    We can then take the Jordan decomposition of the combination matrix:
	\begin{align}
		\bm{A}_{iT+t} = \bm{V}_{iT+t}\bm{J}_{iT+t} \bm{V}_{iT+t}^{-1},
	\end{align}
	where we use the double stochastic nature of $\bm{A}_{iT+t}$ and define:
	\begin{align}
		\bm{J}_{iT+t} &\eqdef \begin{bmatrix}
			1 & 0 \\ 0 & \bm{J}_{\epsilon,iT+t}
		\end{bmatrix},
		\\
		\bm{V}_{iT+t} &\eqdef \begin{bmatrix}
			\frac{1}{K}\mathds{1} & \bm{V}_{R,iT+t}
		\end{bmatrix},
		\\
		\bm{V}_{iT+t}^{-1} & \eqdef \begin{bmatrix}
			\mathds{1}^\tran \\  \bm{V}_{L,iT+t}^\tran
		\end{bmatrix},
	\end{align}
	with $\bm{J}_{\epsilon,iT+t}$ consisting of Jordan blocks with $\epsilon$ on the lower off-diagonal, e.g.:
	\begin{align}
		\begin{bmatrix}
			\lambda & 0 & 0 \\
			\epsilon & \lambda & 0 \\
			0 & \epsilon & \lambda
		\end{bmatrix}.
	\end{align}
Then, if we multiply the error recursion by $\bm{\mathcal{V}}_{(i+1)T}^\tran \eqdef \bm{V}_{(i+1)T}^\tran \otimes I$ from the left: 
\begin{align}
	&\bm{\mathcal{V}}_{(i+1)T}^\tran  \wse_{(i+1)T} 
 \notag \\ 
    &= \bm{\mathcal{J}}_{(i+1)T}\bm{\mathcal{V}}_{(i+1)T}^\tran \prod_{t=1}^T ( I - \M \bm{\mathcal{H}}_{(i+1)T-t})(\bm{\mathcal{V}}_{iT}^{-1})^\tran \bm{\mathcal{V}}_{iT}^\tran \wse_{iT}  
	\notag \\
	&  -\bm{\mathcal{J}}_{(i+1)T}\bm{\mathcal{V}}_{(i+1)T}^\tran  \left \{ I \hspace{-1mm}+\hspace{-1mm} \sum_{t=1}^{T-1} \prod_{t'=1}^t (I \hspace{-1mm}-\hspace{-1mm} \M \bm{\mathcal{H}}_{(i+1)T-t'}  )\right\}\M b
        \notag \\
        & + \bm{\mathcal{J}}_{(i+1)T}\bm{\mathcal{V}}_{(i+1)T}^\tran \M \bm{s}_{(i+1)T}
        \notag \\
        &+\hspace{-1mm} \bm{\mathcal{J}}_{(i+1)T}\bm{\mathcal{V}}_{(i+1)T}^\tran
        \hspace{-0.5mm}\sum_{t=1}^{T-1} \hspace{-0.5mm}\prod_{t'=1}^t (I \hspace{-1mm}-\hspace{-1mm} \M \bm{\mathcal{H}}_{(i+1)T-t'}  )\M \bm{s}_{(i+1)T - t}
        ,
\end{align}	
and if we define:
	\begin{align} 
		\bar{\ws}_{i+1} &\eqdef
            \frac{1}{K}\sum_{k=1}^K \we_{k,(i+1)T}, 
		\\
		\check{\ws}_{i+1} &\eqdef 
            (\bm{V}_{R,(i+1)T}^\tran \otimes I) \wse_{(i+1)T}  
		\\
            \bar{\bm{b}}_{i+1} &\eqdef 
            \frac{1}{K}\sum_{k=1}^K \bm{b}'_{k,i+1} ,
            \\
		\check{\bm{b}}_{i+1} &\eqdef 
            (\bm{J}_{\epsilon,(i+1)T}\bm{V}_{R,(i+1)T}^\tran \otimes I) \bm{b}_{i+1}' ,
            \\
		\bar{\bm{s}}_{i+1} & \eqdef \frac{1}{K}\sum_{k=1}^k {\bm{s}}'_{k,i+1} ,
		\\
		\check{\bm{s}}_{i+1} & \eqdef 
            (\bm{J}_{\epsilon,(i+1)T}\bm{V}_{R,(i+1)T}^\tran \otimes I)\bm{s}'_{i+1} ,
	\end{align}
	where we introduce $\bm{b}'_{i+1}$ and $\bm{s}'_{i+1}$ to encapsulate all the terms with the bias and gradient noise respectively, we can write:
	\begin{align}\label{eq:transErrRec}
		\begin{bmatrix}
			\bar{\ws}_{i+1} \\
			\check {\ws}_{i+1}
		\end{bmatrix} &= \bm{\mathcal{B}}_{i+1}\begin{bmatrix}
		\bar{\ws}_{i}\\
		\check {\ws}_{i}
		\end{bmatrix}  + \begin{bmatrix}
				\bar{\bm{b}}_{i+1} + \bar{\bm{s}}_{i+1} \\
					\check{\bm{b}}_{i+1} + \check{\bm{s}}_{i+1}
		\end{bmatrix},
	\end{align}
	where we define the entries of the matrix $\bm{\mathcal{B}}_{i+1}$: 
	\begin{align}
		\bm{B}_{11,i+1} &\eqdef I - \frac{1}{K}\sum_{k=1}^K \sum_{t=1}^{T}\bm{\mu}_{k,i}\bm{H}_{k,(i+1)T-t} + O(\mu^2), \\
		\bm{B}_{12,i+1} & \eqdef 
                - \frac{1}{K}\row{\bm{\mu}_{k,i} \sum_{t=1}^T\bm{H}_{k,(i+1)T-t}} \bm{V}_{L,iT} \otimes I
                \notag \\
                &\quad + O(\mu^2),
                \\
		\bm{B}_{21,i+1} & \eqdef 
                - \bm{J}_{\epsilon,(i+1)T}\bm{V}_{R,(i+1)T}^\tran \otimes I 
                \notag \\
                &\quad \times \col{\bm{\mu}_{k,i} \sum_{t=1}^T\bm{H}_{k,(i+1)T-t}}+ O(\mu^2),
                \\
		\bm{B}_{22,i+1} & \eqdef 
            \bm{J}_{\epsilon,(i+1)T}\bm{V}_{R,(i+1)T}^\tran \bm{V}_{L,iT}\otimes I
            \notag \\ &\quad 
            - \bm{J}_{\epsilon,(i+1)T} \bm{V}_{R,(i+1)T}^\tran \otimes I \sum_{t=1}^T\M \bm{\mathcal{H}}_{(i+1)T-t}
            \notag \\ &\quad \times
            \bm{V}_{L,iT}\otimes I  + O(\mu^2),
	\end{align}
 which can be calculated by using the following identity:
 \begin{align}
     &\prod_{t=1}^T (I-\M\bm{\mathcal{H}}_{(i+1)T-t} )
     \notag \\
     &= I - \sum_{t=1}^T \M \bm{\mathcal{H}}_{(i+1)T-t}
     \notag \\ &\quad
     + \sum_{t=2}^T (-1)^t \sum_{1\leq t_1\leq t_2 \leq \cdots t_t\leq T} \prod_{j=1}^t\M\bm{\mathcal{H}}_{(i+1)T-t_j}
     \notag \\
     &= I - \sum_{t=1}^T \M \bm{\mathcal{H}}_{(i+1)T-t} +O(\mu^2).
 \end{align}
 From the bounds of the Hessian, we can conclude:
\begin{align}\label{eq:bdH}
    \nu I \leq \bm{H}_{k,iT+1} \leq \delta I,
\end{align}
and can consequently bound the the elements of the matrix as follows:
\begin{align}
    \mathbb{E} \Vert \bm{B}_{11,i+1}\Vert^2 &\leq \left(1 - \frac{T\nu \mu}{K}\sum_{k=1}^K q_k + O(\mu^2) \right)^2,
    \\
    \mathbb{E} \Vert \bm{B}_{12,i+1}\Vert^2 &\leq \frac{T^2\delta^2 \mu^2}{K^2}\mathbb{E} \Vert \bm{V}_{L,(i+1)T}\Vert^2  \sum_{k=1}^K q_k + O(\mu^4)
    \notag \\
    &= T^2 \sum_{k=1}^K q_k O(\mu^2),
    \\
    \mathbb{E} \Vert \bm{B}_{21,i+1}\Vert^2 &\leq T^2\delta^2 \mu^2 \mathbb{E}\Vert \bm{J}_{\epsilon,(i+1)T}\Vert^2 \Vert \bm{V}^\tran_{R,(i+1)T}\Vert^2 \sum_{k=1}^K q_k
    \notag \\ &\quad 
    + O(\mu^4)
    \notag \\
    &= T^2 \sum_{k=1}^K q_k O(\mu^2),
    \\
    \mathbb{E}\Vert \bm{B}_{22,i+1} -& \bm{J}_{\epsilon,(i+1)T}\bm{V}_{R,(i+1)T}^\tran \bm{V}_{L,iT}\otimes I\Vert^2 
    \notag \\ &
    \leq T^2\delta^2 \mu^2 \mathbb{E} \Vert \bm{J}_{\epsilon,(i+1)T}\Vert^2 \Vert \bm{V}^\tran_{R,(i+1)T}\Vert^2 \Vert \bm{V}_{L,iT}\Vert^2  
    \notag \\ &\quad \times
    \sum_{k=1}^K q_k + O(\mu^4)
    \notag \\ 
    &=T^2 \sum_{k=1}^K q_k O(\mu^2).
\end{align}
Next, we bound the bias terms. Thus we split the bias into two terms:
\begin{align}
    \bar{\bm{b}}_{i+1} &=  \frac{(T+O(\mu^2))}{K}\sum_{k=1}^K \bm{\mu}_{k,i} \grad{w}J_k(w^o) 
    \notag \\ & \quad 
    - \frac{1}{K}\sum_{k=1}^K\sum_{t=1}^{T-1}\sum_{t'=1}^t \bm{\mu}_{k,i}^2 \bm{H}_{k,(i+1)T-t'} \grad{w}J_k(w^o)
    \notag \\
    &= \bar{\bm{b}}_{1,i+1}+ \bar{\bm{b}}_{2,i+1},
\end{align}
and taking the conditional expectation over the past models the first term disappears following from the definition of $w^o$, and we get:
\begin{align}
    &\mathbb{E} \{\bar{\bm{b}}_{i+1}| \mathcal{F}_{(i+1)T-1}\} 
    \notag \\
    &= \mathbb{E} \{\bar{\bm{b}}_{2,i+1}| \mathcal{F}_{(i+1)T-1}\}
    \notag \\ &= 
    - \frac{\mu^2}{K}\sum_{k=1}^K\sum_{t=1}^{T-1}\sum_{t'=1}^t q_k  \bm{H}_{k,(i+1)T-t'} \grad{w}J_k(w^o). 
\end{align}
Next, we bound the second-order moment:
\begin{align}
    &\mathbb{E} \{\Vert \bar{\bm{b}}_{i+1}\Vert^2  | \mathcal{F}_{(i+1)T-1}\} 
    \notag \\
    &= \frac{1}{K^2}\sum_{k=1}^K \mathbb{E} \Bigg\{ \bigg\Vert (T+O(\mu^2)) \bm{\mu}_{k,i} \grad{w}J_k(w^0) 
    \notag \\
    &\quad  - \bm{\mu}^2_{k,i} \sum_{t=1}^{T-1} \sum_{t'=1}^t \bm{H}_{k,(i+1)T-t'}\grad{w}J_k(w^o) \bigg\Vert \Bigg| \mathcal{F}_{(i+1)T-1}  \Bigg \}  
    \notag \\
    &= \frac{1}{K^2}\sum_{k=1}^K  (T+O(\mu^2))^2\mu^2 q_k \Vert \grad{w}J_k(w^o)\Vert^2 
    \notag \\
    &\quad + \mu^4 q_{k} \left\Vert \sum_{t=1}^{T-1}\sum_{t'=1}^t \bm{H}_{k,(i+1)T-t'} \grad{w}J_k(w^o) \right\Vert^2 
    \notag \\
    &\quad - 2(T+O(\mu^2))\mu^3 q_k \left\Vert \sum_{t=1}^{T-1}\sum_{t'=1}^t \bm{H}_{k,(i+1)T-t'} \grad{w}J_k(w^o) \right\Vert 
    \notag \\
    &\leq \frac{1}{K^2}\sum_{k=1}^K  \left( (T+O(\mu^2))^2 \mu^2 + O(T^4)\mu^4\right)q_k\Vert \grad{w}J_k(w^o)\Vert^2 
    \notag \\
    \notag \\
    &\quad - \delta (T+O(\mu^2)) T(T-1)\mu^3 q_k \Vert \grad{w}J_k(w^o)\Vert 
    \notag \\
    &= T^2O(\mu^2)\sum_{k=1}^K q_{k}\Vert \grad{w}J_k(w^o)\Vert^2+ O(\mu^4), 
\end{align}
while the second term on its own corresponds to a higher-order term in $\mu$:
\begin{align}
    \mathbb{E} \{ \Vert \bar{\bm{b}}_{2,i+1}\Vert^2 | \mathcal{F}_{(i+1)T-t}\} &\leq O(\mu^4),
\end{align}
and: 
\begin{align}
    &\mathbb{E} \{\Vert \check{\bm{b}}_{i+1}\Vert^2  | \mathcal{F}_{(i+1)T-1}\} 
    \notag \\
    &\leq \mathbb{E} \Vert \bm{J}_{\epsilon,(i+1)T}\Vert^2 \Vert \bm{V}_{R,(i+1)T}\Vert^2  
    \notag \\ & \quad \times
    \mathbb{E} \Bigg\{ 2(T+O(\mu^2))^2 \sum_{k=1}^k \bm{\mu}_{k,i}^2\Vert \grad{w}J_k(w^o)\Vert^2 
    \notag \\ &\quad
    + T(T-1) \delta^2 \sum_{k=1}^K \bm{\mu}_{k,i}^4 \Vert \grad{w}J_k(w^o)\Vert^2  \Bigg\}
    \notag \\
    &= \left ( 2(T\hspace{-0.8mm}+\hspace{-0.8mm}O(\mu^2))^2 \mu^2 \hspace{-0.8mm}+\hspace{-0.8mm} T(T\hspace{-0.8mm}-\hspace{-0.8mm}1) \delta^2 \mu^4 \right)\sum_{k=1}^K q_k \Vert \grad{w}J_{k}(w^o)\Vert^2 
    \notag \\
    &= T^2O(\mu^2)\sum_{k=1}^K q_{k}\Vert \grad{w}J_k(w^o)\Vert^2 +O(\mu^4).
\end{align}
Next, to bound the gradient noise, first we have:
\begin{align}
    &\mathbb{E} \{ \Vert \bar{\bm{s}}_{i+1} \Vert^2 | \bm{\mathcal{F}}_{(i+1)T-1}\} 
    \notag \\
    &= \frac{1}{K^2}\sum_{k=1}^K \mathbb{E}\Bigg \{ \hspace{-0.8mm}\bm{\mu}_{k,i}^2 \Bigg\Vert \hspace{-0.8mm} \sum_{t=0}^{T-1}\bm{s}_{k,(i+1)T-t} \hspace{-0.8mm}+\hspace{-0.8mm} O(\mu^2)\sum_{t=1}^{T-1}\bm{s}_{k,(i+1)T-t}
    \notag \\
    &\quad - \sum_{t=1}^{T-1}\sum_{t'=1}^t \bm{H}_{k,(i+1)T-t'}\bm{s}_{k,(i+1)T-t} \Bigg\Vert^2 \Bigg | \bm{\mathcal{F}}_{(i+1)T-1}\Bigg\}
    \notag \\
    &\leq \frac{3\mu^2}{K^2}\sum_{k=1}^K q_k \Bigg\{ \sum_{t=0}^{T-1}\mathbb{E}\{\Vert \bm{s}_{k,(i+1)T-t}\Vert^2 | \bm{\mathcal{F}}_{(i+1)T-1}\}
    \notag \\ &\quad
    + O(\mu^4)\sum_{t=1}^{T-1}\mathbb{E}\{\Vert \bm{s}_{k,(i+1)T-t}\Vert^2 | \bm{\mathcal{F}}_{(i+1)T-1}\} 
    \notag \\ &\quad 
    + \delta^2\sum_{t=1}^{T-1}t \mathbb{E} \{\Vert  \bm{s}_{k,(i+1)T-t}\Vert^2  | \bm{\mathcal{F}}_{(i+1)T-1} \} \Bigg\} 
    \notag \\ 
    &\leq \frac{3\mu^2}{K^2}\sum_{k=1}^K q_k \bigg\{\beta_s^2 \Vert \we_{k,(i+1)T-1}\Vert^2 + \sigma_s^2
    \notag \\
    &\quad + (T\delta^2 +O(\mu^4))\sum_{t=1}^{T-1} (\beta_s^2\Vert \we_{k,(i+1)T-t-1}\Vert^2 + \sigma_s^2) \bigg\}
    \notag \\
    &= O(\mu^2) \sum_{k=1}^K \sum_{t=0}^{T-1}\Vert \we_{k,(i+1)T-t-1}\Vert^2 + O(\mu^2)\sigma_s^2 + O(\mu^4),
\end{align}
and following similar steps we can show: 
\begin{align}
    &\mathbb{E} \{ \Vert \check{\bm{s}}_{i+1} \Vert^2 | \bm{\mathcal{F}}_{(i+1)T-1}\} 
    \notag \\
    &\leq O(\mu^2)\sum_{k=1}^K \mathbb{E} \{ \Vert \bm{s}_{k,(i+1)T}\Vert^2 |\bm{\mathcal{F}}_{(i+1)T-1} \} 
    \notag \\
    &\quad + (O(\mu^2) + O(\mu^4)) \sum_{k=1}^K\sum_{t=1}^{T-1} \mathbb{E} \{ \Vert \bm{s}_{k,(i+1)T-t}\Vert^2 |\bm{\mathcal{F}}_{(i+1)T-1} \} 
    \notag \\ 
    &\leq O(\mu^2)\sum_{k=1}^K \sum_{t=0}^{T-1} \Vert \we_{k,(i+1)T-t-1}\Vert^2 + O(\mu^2)\sigma_s^2 + O(\mu^4).
\end{align}
Then taking the expectation again:
\begin{align}
    \mathbb{E}  \Vert \bar{\bm{s}}_{i+1} \Vert^2 
    &\leq O(\mu^2) \sum_{k=1}^K \sum_{t=1}^{T-1} (1\hspace{-0.8mm}-\hspace{-0.8mm}O(\mu)\hspace{-0.8mm}+\hspace{-0.8mm}O(\mu^2))^{T-t} \mathbb{E}\Vert \we_{k,iT}\Vert^2
    \notag \\ &\quad
    + O(\mu^2)\sigma_s^2 + O(\mu^4) 
    \notag \\ 
    &\leq O(\mu^2) \mathbb{E}\Vert \wse_{iT}\Vert^2 + O(\mu^2)\sigma_s^2 + O(\mu^4),
    \\
    \mathbb{E}\Vert\check{\bm{s}}_{i+1}\Vert^2 &\leq O(\mu^2) \mathbb{E}\Vert \wse_{iT}\Vert^2 + O(\mu^2)\sigma_s^2 + O(\mu^4).
\end{align}

Then, we take the conditional mean of the $\ell_2-$norm of the individual equations over all the past models $\mathcal{F}_{(i+1)T-1}$. We use the independence of the models from the bias and gradient noise and since the gradient noise is zero-mean in addition to applying Jensen's inequality for some positive constants $\bar{\tau}, \check{\tau} < 1$:
\begin{align}
	&\ \mathbb{E} \{ \Vert \bar{\ws}_{i+1}\Vert^2 | \mathcal{F}_{(i+1)T-1}\} \\
    &\ \leq \frac{\mathbb{E}\Vert \bm{B}_{11,(i+1)T}\Vert^2}{\bar{\tau}} \Vert \bar{\ws}_{i}\Vert^2 
    \notag + \frac{2\mathbb{E}\Vert\bm{B}_{12,i+1}\Vert^2}{1-\bar{\tau}} \Vert \check{\ws}_{i}\Vert^2 
	\notag \\
        &\quad + \frac{2}{1-\bar{\tau}}\mathbb{E}\{ \Vert \bar{\bm{b}}_{2,i+1}\Vert^2 | \mathcal{F}_{(i+1)T-1}\} 
        \notag \\
        &\quad + \mathbb{E} \{\Vert \bar{\bm{b}}_{1,i+1}\Vert^2  | \mathcal{F}_{(i+1)T-1}\} + \mathbb{E} \{\Vert \bar{\bm{s}}_{i+1}\Vert^2  | \mathcal{F}_{(i+1)T-1}\}, \nonumber 
\end{align}	
and:
\begin{align}
	&\mathbb{E} \{ \Vert \check{\ws}_{i+1}\Vert^2 | \mathcal{F}_{(i+1)T-1}\} 
 \notag \\
 &\leq \frac{\mathbb{E}\Vert \bm{J}_{\epsilon,(i+1)T}\bm{V}_{R,(i+1)T}^\tran \bm{V}_{L,iT} \Vert^2}{\check{\tau}} \Vert \check{\ws}_{i}\Vert^2  + \frac{3}{1-\check{\tau}}
	\notag \\
	&\quad \times \Big(\mathbb{E} \Vert \bm{B}_{22,i+1} - \bm{J}_{\epsilon,(i+1)T}\bm{V}_{R,(i+1)T}^\tran \bm{V}_{L,iT}\otimes I\Vert^2 \Vert \check{\ws}_{i}\Vert^2 
 \notag \\
	&\quad + \mathbb{E}\Vert \bm{B}_{21,i+1} \Vert^2 \Vert \bar{\ws}_{i}\Vert^2  + \mathbb{E}\{\Vert \check{\bm{b}}_{i+1}\Vert^2 | \mathcal{F}_{(i+1)T-1}\} \Big)
    \notag \\ &	 \quad 
    + \mathbb{E} \{\Vert \check{\bm{s}}_{i+1}\Vert^2  | \mathcal{F}_{(i+1)T-1}\}.
\end{align}

Next setting:
\begin{align}
    \bar{\tau} &= \sqrt{\mathbb{E}\Vert \bm{B}_{11,i+1}\Vert^2} = 1 - \frac{T\nu \mu}{K}\sum_{k=1}^K q_k + O(\mu^2) ,
    \\
    \check{\tau} &= \sqrt{\mathbb{E}\Vert \bm{J}_{\epsilon,(i+1)T}\bm{V}_{R,(i+1)T}^\tran \bm{V}_{L,iT} \Vert^2} = O(1),
\end{align}
we get by retaking the expectation:
\begin{align}
    \mathbb{E} \Vert \bar{\ws}_{i+1}\Vert^2 &\leq 
\bar{\tau} \mathbb{E} \Vert \bar{\ws}_{i}\Vert^2  
     + O(\mu) \mathbb{E} \Vert \check{\ws}_{i}\Vert^2  
     \notag \\ 
     & \quad + O(\mu^2) + O(\mu^2) \left ( \beta_s^2 \mathbb{E} \Vert \wse_{iT}\Vert^ 2 + \sigma_s^2 \right) 
     \notag \\
     &\leq \bar{\gamma} \mathbb{E} \Vert \bar{\ws}_{i}\Vert^2  + O(\mu) \mathbb{E} \Vert \check{\ws}_{i}\Vert^2 
     + O(\mu^2 ), 
\end{align}
 and:
 \begin{align}
     \mathbb{E} \Vert \check{\ws}_{i+1}\Vert^2 &\leq \check{\tau} \mathbb{E} \Vert \check{\ws}_i\Vert^2 
     \notag \\
     &\quad + O(\mu^2) \mathbb{E} \Vert \check{\ws}_i\Vert^2   +T^2 \sum_{k=1}^K q_k O(\mu^2)\mathbb{E} \Vert \bar{\ws}_i\Vert^2 
     \notag \\
     &\quad  + O(\mu^2)T^2\sum_{k=1}^K q_k \Vert \grad{w}J_k(w^o)\Vert^2 + O(\mu^2) \sigma_s^2 
     \notag \\ &\quad 
     + O(\mu^4)
     \notag \\ 
     &\leq \check{\gamma} \mathbb{E} \Vert \check{\ws}_{i-1}\Vert^2 + O(\mu^2)\mathbb{E} \Vert \bar{\ws}_i\Vert^2 + O(\mu^2),
 \end{align}
 with:
 \begin{align}
     \bar{\gamma} &\eqdef \bar{\tau} + O(\mu^2)\beta_s^2, \\
     \check{\gamma} &= \check{\tau}+ T^2\sum_{k=1}^k q_k \Vert \grad{w}J_k(w^o)\Vert^2 O(\mu^2 ) + O(\mu^2)\beta_s^2 .
 \end{align}
Defining the matrix $\Gamma$:
\begin{align}
    \Gamma \eqdef \begin{bmatrix}
        \bar{\gamma} & O(\mu) \\
        O(\mu^2) & \check{\gamma}
    \end{bmatrix},
\end{align}
then:
\begin{align}
    \begin{bmatrix}
        \mathbb{E} \Vert \bar{\ws}_{i+1}\Vert^2 \\
        \mathbb{E} \Vert \check{\ws}_{i+1}\Vert^2
    \end{bmatrix} &\leq \Gamma \begin{bmatrix}
        \mathbb{E} \Vert \bar{\ws}_{i}\Vert^2 \\
        \mathbb{E} \Vert \check{\ws}_{i}\Vert^2
    \end{bmatrix} + O(\mu^2)\mathds{1}
    \notag \\ 
    &\leq \Gamma^i \begin{bmatrix}
        \mathbb{E} \Vert \bar{\ws}_{0}\Vert^2 \\
        \mathbb{E} \Vert \check{\ws}_{0}\Vert^2
    \end{bmatrix} + (I-\Gamma)^{-1}(I-\Gamma^i)O(\mu^2)\mathds{1}.
\end{align}
Taking the limit as $i$ goes to infinity we get the desired result:
\begin{align}
    \limsup_{i\to\infty} \begin{bmatrix}
        \mathbb{E} \Vert \bar{\ws}_{i+1}\Vert^2 \\
        \mathbb{E} \Vert \check{\ws}_{i+1}\Vert^2
    \end{bmatrix} &\leq (I-\Gamma) O(\mu^2) \mathds{1}
    \notag \\
    &= \begin{bmatrix}
        1/O(\mu) & O(1) \\
        O(\mu) & O(1)
    \end{bmatrix} O(\mu^2)\mathds{1}
\end{align}
and:
\begin{align}
    \limsup_{i\to\infty} \mathbb{E} \Vert \wse_{(i+1)T}\Vert^2 &\leq \limsup_{i\to\infty} \mathbb{E} \left\Vert \left(\bm{\mathcal{V}}_{(i+1)T }^{-1} \right)^\tran \begin{bmatrix}
        \bar{\ws}_{i+1} \\ \check{\ws}_{i+1}  
    \end{bmatrix}
    \right\Vert^2
    \notag \\
    &\leq \limsup_{i\to\infty} \Vert \bm{\mathcal{V}}_{(i+1)T }^{-1} \Vert^2 \left( \mathbb{E} \Vert \bar{\ws}_{i+1}\Vert^2 \right.
    \notag \\ &\quad \left. 
    + \mathbb{E} \Vert \check{\ws}_{i+1}\Vert^2 \right)
    \notag \\
    &= O(\mu).
\end{align}

\section{Proof of Theorem \ref{thrm:4thStab}}\label{app:thrm4thStab}
The proof follows closely the proof of Theorem 9.2 in \cite{sayed2014adaptation}. Therefore, we present a more concise version with the key differences. We start by using the following relation:
\begin{align}
    \Vert x + y \Vert ^4 &\leq \Vert x\Vert^4 + 3\Vert y\Vert^4 + 8\Vert x\Vert^2\Vert y\Vert^2 + 4\Vert x\Vert^2 a^\tran b, 
\end{align}
by first setting:
\begin{align}
    x &= \bm{B}_{11,i+1}\bar{\ws}_{i} + \bm{B}_{12,i+1}\check{\ws}_i + \bar{\bm{b}}_{2,i+1} , \\
    y &= \bar{\bm{b}}_{1,i+1} + \bar{\bm{s}}_{i+1},
\end{align}
to get:
\begin{align}
    &\ \mathbb{E} \Vert \bar{\ws}_{i+1}\Vert^4 \nonumber \\
    &\leq \mathbb{E}\Vert \bm{B}_{11,i+1}\Vert \mathbb{E}\Vert \bar{\ws}_i\Vert^4 + \frac{2\mathbb{E} \Vert \bm{B}_{12,i+1}\Vert^4 }{(1-\mathbb{E}\Vert \bm{B}_{11,i+1}\Vert)^3} \mathbb{E} \Vert \check{\ws}_i\Vert^4
    \notag \\ &\quad 
    + \frac{2}{(1-\mathbb{E}\Vert \bm{B}_{11,i+1}\Vert)^3} \mathbb{E} \Vert \bar{\bm{b}}_{2,i+1}\Vert^4 
    \notag \\
    &\quad 
    + 3\mathbb{E}\Vert \bar{\bm{b}}_{1,i+1}\Vert^4 + 3\mathbb{E}\Vert \bar{\bm{s}}_{i+1}\Vert^4 
    \notag \\
    &\quad 
    +\Big((1-O(\mu)+O(\mu^2))\mathbb{E}\Vert \bar{\ws}_i\Vert^2 + O(\mu^2) \mathbb{E}\Vert \check{\ws}_i\Vert^2 
    \notag \\ &\quad 
    + O(\mu^3)\Big)(\mathbb{E}\Vert \bar{\bm{b}}_{1,i+1}\Vert^2 + \mathbb{E}\Vert \bar{\bm{s}}_{i+1}\Vert^2 )
    \notag \\
    &\leq (1-O(\mu)+O(\mu^2))\mathbb{E} \Vert \bar{\ws}_i\Vert^4 + O(\mu) \mathbb{E} \Vert \check{\ws}_i\Vert^4 
    \notag \\
    &\quad + O(\mu^4)  + \Big((1-O(\mu)+O(\mu^2))\mathbb{E}\Vert \bar{\ws}_i\Vert^2 
    \notag \\ &\quad 
    + O(\mu^2) \mathbb{E}\Vert \check{\ws}_i\Vert^2  + O(\mu^3)\Big)
    \Big( O(\mu^2)\mathbb{E}\Vert \bar{\ws}_i\Vert^2
    \notag \\ &\quad 
    + O(\mu^2)\mathbb{E}\Vert \check{\ws}_i\Vert^2  + O(\mu^2) \Big)
    \notag \\
    &\leq (1-O(\mu)+O(\mu^2))\mathbb{E} \Vert \bar{\ws}_i\Vert^4 + O(\mu) \mathbb{E} \Vert \check{\ws}_i\Vert^4  
    \notag \\ &\quad 
    + O(\mu^2)\mathbb{E} \Vert \bar{\ws}_i\Vert^3 + O(\mu^3)\mathbb{E} \Vert \check{\ws}_i\Vert^2 + O(\mu^4)
\end{align}
and then to:
\begin{align}
    x &= \bm{B}_{22,i+1}\check{\ws}_{i} + \bm{B}_{21,i+1}\bar{\ws}_i + \check{\bm{b}}_{i+1} , \\
    y &=  \check{\bm{s}}_{i+1},
\end{align}
to get:
\begin{align}
    \mathbb{E}\Vert \check{\ws}_{i+1}\Vert^4 &\leq (\rho(J_{\epsilon}) +O(\mu^2)) \mathbb{E} \Vert \check{\ws}_i\Vert^4 + O(\mu) \mathbb{E} \Vert \bar{\ws}_i\Vert^4
    \notag \\ 
    &\quad + O(\mu^2) \mathbb{E} \Vert \check{\ws}_i\Vert^2 + O(\mu^4)\mathbb{E} \Vert \bar{\ws}_i\Vert^2 + O(\mu^4).
\end{align}
The desired result easily follows:
\begin{align}
    \limsup_{i\to\infty} \mathbb{E}\Vert \wse_{(i+1))T}\Vert^4&= \limsup_{i\to \infty} \mathbb{E}\Vert \bar{\ws}_{i+1}\Vert^4 + \mathbb{E}\Vert \bar{\ws}_{i+1}\Vert^4 
    \notag \\
    &= O(\mu^2).
\end{align}

\section{Proof of Theorem \ref{thrm:approxErr}} \label{app:thrmApproxErr}
    Introducing the variable:
    \begin{align}
        \bm{z}_{iT+t} \eqdef \wse_{iT+t} - \wse'_{iT+t},
    \end{align}
    and subtracting the long-term model \eqref{eq:longTermRec} from the error recursion \eqref{eq:errRec}, we have:
    \begin{align}
        \bm{z}_{iT+t} = \A^\tran (I - \M \mathcal{H}) \bm{z}_{iT+t-1} + \A^\tran \M \bm{c}_{iT+t-1}.
    \end{align}
    Then, we can write at $t=T$:
    \begin{align}
        \bm{z}_{(i+1)T} &= \bm{\mathcal{A}}_{(i+1)T}^\tran (I - \M \mathcal{H} )^T \bm{z}_{iT} 
        \notag \\ &\quad
        + \bm{\mathcal{A}}_{(i+1)T}^\tran\M\sum_{t=1}^{T-1} (I-\M \mathcal{H})^t\bm{c}_{(i+1)T-t}.
    \end{align}
    Then if we multiply from the left by $\bm{\mathcal{V}}_{(i+1)T}^\tran$, we transform the recursion to:
    \begin{align}
        \begin{bmatrix}
            \bm{\bar{z}}_{i+1} \\
            \bm{\check{z}}_{i+1}
        \end{bmatrix}
        &= \bm{B}'_{i+1} \begin{bmatrix}
            \bm{\bar{z}}_{i} \\
            \bm{\check{z}}_{i}
        \end{bmatrix}
        + \begin{bmatrix}
            \bar{\bm{c}}_{i} \\
            \check{\bm{c}}_i
        \end{bmatrix}.
    \end{align}
Similar to in the proof of Theorem \ref{thrm:2ndStab}, we can show:
\begin{align}
    \mathbb{E}\Vert\bar{\bm{z}}_{i+1} \Vert^2 &\leq (1-O(\mu) + O(\mu^2)) \mathbb{E}\bar{\bm{z}}_{i} + O(\mu) \mathbb{E}\check{\bm{z}}_{i+1} 
    \notag \\
    &\quad + O(\mu^{-1})\Vert \bar{\bm{c}}_{i-1}\Vert^2,
    \\
    \mathbb{E}\Vert\check{\bm{z}}_{i+1} \Vert^2 &\leq (\rho(J_{\epsilon}) +  O(\mu^2)) \mathbb{E}\Vert\check{\bm{z}}_{i}\Vert^2+ O(\mu^2) \mathbb{E}\Vert\bar{\bm{z}}_{i} \Vert^2
    \notag \\
    &\quad + O(1)\Vert \check{\bm{c}}_{i-1}\Vert^2.
\end{align}
We use the following bounds:
\begin{align}
    &\mathbb{E} \left\Vert \bm{\mathcal{A}}_{(i+1)T}^\tran \M \sum_{t=1}^{T-1} (I-\M \mathcal{H})^t\bm{c}_{(i+1)T-t} \right\Vert^2
    \notag \\
    &\leq O(\mu^2)\sum_{t=1}^{T-1} (1-O(\mu))^{2t}\mathbb{E}\Vert\bm{c}_{(i+1))T-t}\Vert^2 
    \notag \\
    &\leq O(\mu^2 )\sum_{t=1}^{T-1}(1-O(\mu))^{2t} \mathbb{E}\Vert \wse_{(i+1)T-t}\Vert^4
\end{align}
 Using the forth-order stability of the algorithm (Theorem \ref{thrm:4thStab}), we get the desired result:
 \begin{align}
     \limsup_{i\to \infty} \mathbb{E}\Vert \bm{z}_i\Vert^2 &= \mathds{1}^\tran \begin{bmatrix}
         O(\mu) & -O(\mu)\\
         -O(\mu^2) & O(1)
     \end{bmatrix}^{-1}\begin{bmatrix}
         O(\mu) \\ O(\mu^2)
     \end{bmatrix} 
     \notag \\ & \quad \times
     \limsup_{i\to \infty} \sum_{t=1}^T (1-O(\mu))^t\mathbb{E}\Vert \wse_{(i+1)T-t)}\Vert^4 
     \notag \\
     &= O(\mu^2).
 \end{align}
 Finally, since:
 \begin{align}
     &\ \mathbb{E}\Vert \wse'_{(i+1)T}\Vert^2 \nonumber \\
     &\leq  \mathbb{E}\Vert \wse'_{(i+1)T} - \wse_{(i+1)T}\Vert^2 + \mathbb{E}\Vert \wse_{(i+1)T}\Vert^2 
     \notag \\
     &\quad + 
     2\sqrt{\mathbb{E}\Vert\wse'_{(i+1)T} - \wse_{(i+1)T}\Vert^2  \mathbb{E}\Vert \wse_{(i+1)T} \Vert^2 },
 \end{align}
 we get: 
 \begin{align}
     \limsup_{i\to \infty} \mathbb{E}\Vert \wse'_{(i+1)T}\Vert^2 - \mathbb{E}\Vert \wse_{(i+1)T}\Vert^2 &\leq O(\mu^2) + O(\mu^{3/2}).
 \end{align}

\section{Proof of Theorem \ref{thrm:stabLongTerm}}\label{app:thrmStabLongTerm}
We start by transforming the recursion using $\bm{\mathcal{V}}_{(i+1)T}^\tran$:
\begin{align}
    \wse'_{(i+1)T} &= \bm{\mathcal{A}}_{(i+1)T}^\tran (I-\M \mathcal{H})^T \wse'_{iT} 
    \notag \\ &\quad 
    + \bm{\mathcal{A}}_{(i+1)T}^\tran\sum_{t=0}^{T-1}(I-\M \mathcal{H})^t \M (b +\bm{s}_{(i+1)T-t})
\end{align}
to get the following recursion:
\begin{align}
    \begin{bmatrix}
        \bar{\ws}'_{i+1} \\
        \check{\ws}'_{i+1}
    \end{bmatrix}
    &= \bm{\mathcal{B}}'_{i+1} \begin{bmatrix}
        \bar{\ws}'_{i} \\
        \check{\ws}'_{i}
    \end{bmatrix}
    + \begin{bmatrix}
        \bar{\bm{b}}_{i+1} + \bar{\bm{s}}_{i+1}\\
        \check{\bm{b}}_{i+1}+ \check{\bm{s}}_{i+1}
    \end{bmatrix}.
\end{align}
Looking at the entries of the matrix $\bm{\mathcal{B}}'_{i+1}$:
\begin{align}
		\bm{B}'_{11,i+1} &\eqdef I - \frac{T}{K}\sum_{k=1}^K \bm{\mu}_{k,i}H_{k} + O(\mu^2), \\
		\bm{B}'_{12,i+1} & \eqdef 
                - \frac{T}{K}\row{\bm{\mu}_{k,i} {H}_{k} }\bm{V}_{L,iT} \otimes I+ O(\mu^2),
                 \\
		\bm{B}'_{21,i+1} & \eqdef 
                - T\bm{J}_{\epsilon,(i+1)T}\bm{V}_{R,(i+1)T}^\tran \otimes I \col{\bm{\mu}_{k,i} {H}_{k}}
                \notag \\ &\quad
                + O(\mu^2),
                \\
		\bm{B}'_{22,i+1} & \eqdef 
            \bm{J}_{\epsilon,(i+1)T}\bm{V}_{R,(i+1)T}^\tran \bm{V}_{L,iT}\otimes I
            \notag \\ &\quad 
            - T\bm{J}_{\epsilon,(i+1)T} \bm{V}_{R,(i+1)T}^\tran \otimes I \M {\mathcal{H}} \bm{V}_{L,iT}\otimes I
            \notag \\ &\quad 
              + O(\mu^2).
	\end{align}
Then, if we take the conditional expectation over the past models and apply Jensen's inequality with the same constants $\bar{\tau}$ and $\check{\tau}$, and then retake the expectation we can show:
\begin{align}
	&\ \mathbb{E}  \Vert \bar{\ws}'_{i+1}\Vert^2  \nonumber \\
    &\leq \mathbb{E}\Vert \bm{B}'_{11,(i+1)T}\Vert \Vert \mathbb{E}\bar{\ws}'_{i}\Vert^2  + \frac{2\mathbb{E}\Vert\bm{B}'_{12,i+1}\Vert^2}{1-\bar{\tau}} \Vert \check{\ws}'_{i}\Vert^2  \nonumber \\
    &+ \frac{2}{1-\bar{\tau}}\mathbb{E} \Vert \bar{\bm{b}}_{2,i+1}\Vert^2  + \mathbb{E} \Vert \bar{\bm{b}}_{1,i+1}\Vert^2 + \mathbb{E} \Vert \bar{\bm{s}}_{i+1}\Vert^2,
\end{align}	
and:
\begin{align}
	&\mathbb{E} \Vert \check{\ws}'_{i+1}\Vert^2  
 \notag \\
 &\leq \mathbb{E}\Vert \bm{J}_{\epsilon,(i+1)T}\bm{V}_{R,(i+1)T}^\tran \bm{V}_{L,iT} \Vert \mathbb{E}\Vert \check{\ws}'_{i}\Vert^2  + \frac{3}{1-\check{\tau}}
	\notag \\
	&\quad \times \Big(\mathbb{E} \Vert \bm{B}'_{22,i+1} - \bm{J}_{\epsilon,(i+1)T}\bm{V}_{R,(i+1)T}^\tran \bm{V}_{L,iT}\otimes I\Vert^2 \Vert \mathbb{E}\check{\ws}'_{i}\Vert^2 
 \notag \\
	&\quad + \mathbb{E}\Vert \bm{B}'_{21,i+1} \Vert^2\mathbb{E} \Vert \bar{\ws}'_{i}\Vert^2  + \mathbb{E}\Vert \check{\bm{b}}_{i+1}\Vert^2  \Big)
    \notag \\ &	 \quad 
    + \mathbb{E} \Vert \check{\bm{s}}_{i+1}\Vert^2  .
\end{align}

We can show that we get similar bounds on the bias and gradient noise thus:
\begin{align}
    \mathbb{E} \Vert \bar{\ws}'_{i+1}\Vert^2 
     &\leq \bar{\gamma}' \mathbb{E} \Vert \bar{\ws}'_{i}\Vert^2  + O(\mu) \mathbb{E} \Vert \check{\ws}'_{i}\Vert^2 
     + O(\mu^2 ), 
\\
     \mathbb{E} \Vert \check{\ws}'_{i+1}\Vert^2 &\leq \check{\gamma}' \mathbb{E} \Vert \check{\ws}'_{i-1}\Vert^2 + O(\mu^2)\mathbb{E} \Vert \bar{\ws}'_i\Vert^2 + O(\mu^2).
 \end{align}
Recursively repeating the bound and taking the limit as $i$ goes to infinity, we get the bound from Theorem \ref{thrm:stabLongTerm}.

\section{Proof of Theorem \ref{thrm:MSD}}\label{app:thrmMSD}
We start with the long-term model:
\begin{align}
    \wse'_{iT+t} &= \A^\tran (I-\M\mathcal{H})\wse'_{iT+t-1} - \A^\tran \M b 
    \notag \\ &\quad
    + \A^\tran \M \bm{s}_{iT+t},
\end{align}
and show that:
\begin{align}
    \mathbb{E}\wse'_{iT+t} &=  \mathbb{E}\A^\tran (I-\M\mathcal{H}) \wse'_{iT+t-1} - \mathbb{E} \A^\tran \M b,
\end{align}
and particularly for $t=T$:
\begin{align}
    \mathbb{E}\wse'_{(i+1)T} &= \bar{\mathcal{B}}_T\mathbb{E}\wse'_{iT} 
     -\mu\sum_{t=0}^{T-1} \bar{\mathcal{B}}_t b,
\end{align}
where we introduce the notation:
\begin{align}
    \bar{\mathcal{B}}_t &\eqdef  \mathbb{E}\bm{\mathcal{A}}_{(i+1)T}^\tran (I-\M \mathcal{H})^t
    \notag \\
    &= \Abar^\tran (I-\mu \mathcal{H})^t + \sum_{t'=1}^t \binom{t}{t'} \mu^{t'} \diag{(1-q_k)H_k^{t'}}
    .
\end{align}
Then, we can show that the average long-term model converges to:
\begin{align}
    \lim_{i\to \infty} \mathbb{E} \wse'_{(i+1)T} 
    &= -\mu (I-\bar{\mathcal{B}}_T)^{-1}  \sum_{t=0}^{T-1}\bar{\mathcal{B}}_t b = O(\mu),
\end{align}
since $I-\bar{\mathcal{B}}_T = O(1)$. We next calculate:
\begin{align}
   & \mathbb{E} \{ \wse'_{(i+1)T} \wse'^\tran_{(i+1)T} | \mathcal{F}_{(i+1)T-1} \} 
    \notag \\
    &= \mathbb{E} \{ \bm{\mathcal{A}}_{(i+1)T}^\tran (I-\M \mathcal{H})^T \wse'_{iT} \wse'^\tran _{iT} \left((I-\M \mathcal{H})^T\right)^\tran 
    \notag \\ &\quad \times 
     \bm{\mathcal{A}}_{(i+1)T} |\mathcal{F}_{(i+1)T-1}\} + \mathbb{E}\bm{\mathcal{A}}_{(i+1)T}^\tran  \hspace{-0.8mm}\sum_{t=0}^{T-1}(I\hspace{-0.8mm}-\hspace{-0.8mm}\M \mathcal{H})^t\M b
    \notag \\
    &\quad \times b^\tran \M \sum_{t=0}^{T-1} \left((I-\M \mathcal{H})^t\right)^\tran\bm{\mathcal{A}}_{(i+1)T}
    \notag \\ 
    &\quad + \mathbb{E} \Bigg\{ \bm{\mathcal{A}}_{(i+1)T}^\tran \hspace{-0.8mm}\sum_{t=0}^{T-1}(I \hspace{-0.8mm}-\hspace{-0.8mm} \M \mathcal{H})^t\M \bm{s}_{(i+1)T-t}\bm{s}_{(i+1)T-t}^\tran \M 
    \notag \\ &\quad \times 
    \sum_{t=0}^{T-1}\left((I-\M \mathcal{H})^t\right)^\tran\bm{\mathcal{A}}_{(i+1)T} \Bigg| \mathcal{F}_{(i+1)T-1} \Bigg\}
    \notag \\
    &\quad -2 \mathbb{E} \Bigg\{ \bm{\mathcal{A}}_{(i+1)T}^\tran (I-\M \mathcal{H})^T \wse'_{iT} b^\tran \M 
    \notag \\ &\quad \times \sum_{t=0}^{T-1}\left((I-\M \mathcal{H})^t\right)^\tran
    \bm{\mathcal{A}}_{(i+1)T} \Bigg| \mathcal{F}_{(i+1)T-1}  \Bigg\},
\end{align}
and arrive at:
\begin{align}
    &\mathbb{E} \{\wse'_{(i+1)T} \otimes_b \wse'_{(i+1)T} | \mathcal{F}_{(i+1)T-1}\} 
    \notag \\
    &=  \mathbb{E} \{ \bm{\mathcal{A}}_{(i+1)T}^\tran (I-\M \mathcal{H})^T \otimes_b \bm{\mathcal{A}}_{(i+1)T}^\tran (I-\M \mathcal{H})^T
    \notag \\ &\quad \times 
    \wse'_{iT} \otimes_b \wse'_{iT} 
    \notag \\ &\quad 
    + \mathbb{E} \bm{\mathcal{A}}_{(i+1)T}^\tran \sum_{t=0}^{T-1}(I-\M \mathcal{H})^t\M \otimes_b \bm{\mathcal{A}}_{(i+1)T}^\tran \
    \notag \\ &\quad \times
    \sum_{t=0}^{T-1}(I-\M \mathcal{H})^t \M \big(b\otimes_b b 
    \notag \\ &\quad 
    + \mathbb{E} \{ \bm{s}_{(i+1)T-t} \otimes_b \bm{s}_{(i+1)T-t} | \mathcal{F}_{(i+1)T-1}\}  \big)
    \notag \\ &\quad 
    -2 \mathbb{E} \bm{\mathcal{A}}_{(i+1)T}^\tran (I-\M \mathcal{H})^T \otimes_b 
    \bm{\mathcal{A}}_{(i+1)T}^\tran  
    \notag \\ &\quad \times \sum_{t=0}^{T-1}(I-\M \mathcal{H})^t\M 
    \wse'_{iT} \otimes_b b
    . 
\end{align}
Then, taking the expectation again:
\begin{align}
   & \mathbb{E} \wse'_{(i+1)T} \otimes_b \wse'_{(i+1)T}
    \notag \\
    &=  \mathbb{E} \bm{\mathcal{A}}_{(i+1)T}^\tran (I-\M \mathcal{H})^T \otimes_b \bm{\mathcal{A}}_{(i+1)T}^\tran (I-\M \mathcal{H})^T
    \notag \\ &\quad \times 
    \mathbb{E}\wse'_{iT} \otimes_b \wse'_{iT} 
    \notag \\ &\quad 
    + \mathbb{E} \bm{\mathcal{A}}_{(i+1)T}^\tran \sum_{t=0}^{T-1}(I-\M \mathcal{H})^t\M \otimes_b \bm{\mathcal{A}}_{(i+1)T}^\tran \
    \notag \\ &\quad \times
    \sum_{t=0}^{T-1}(I \hspace{-0.8mm}-\hspace{-0.8mm} \M \mathcal{H})^t \M \big(b\otimes_b\hspace{-0.5mm} b \hspace{-0.8mm}+\hspace{-0.8mm} \mathbb{E} \bm{s}_{(i+1)T-t} \otimes_b \bm{s}_{(i+1)T-t} \big)
    \notag \\ &\quad 
    -2 \mathbb{E} \bm{\mathcal{A}}_{(i+1)T}^\tran (I-\M \mathcal{H})^T \otimes_b 
    \bm{\mathcal{A}}_{(i+1)T}^\tran  
    \notag \\ &\quad \times
    \sum_{t=0}^{T-1}(I-\M \mathcal{H})^t\M \mathbb{E}\wse'_{iT} \otimes_b b. 
\end{align}
Thus, if we define:
\begin{align}
    z_{i+1} &\eqdef \mathbb{E} \wse'_{(i+1)T} \otimes_b \wse'_{(i+1)T}, 
    \\
    \mathcal{G} &\eqdef \mathbb{E} \bm{\mathcal{A}}_{(i+1)T}^\tran (I\hspace{-0.5mm}-\hspace{-0.5mm}\M \mathcal{H})^T\otimes_b \bm{\mathcal{A}}_{(i+1)T}^\tran (I\hspace{-0.5mm}-\hspace{-0.5mm}\M \mathcal{H})^T, 
    \\
    y_{i+1} &\eqdef \mathbb{E} \bm{\mathcal{A}}_{(i+1)T}^\tran \sum_{t=0}^{T-1}(I-\M \mathcal{H})^t\M \otimes_b \bm{\mathcal{A}}_{(i+1)T}^\tran \
    \notag \\ &\quad \times
    \sum_{t=0}^{T-1}(I-\M \mathcal{H})^t \M \big(b\otimes_b b
    \notag \\ &\quad 
    +\hspace{-0.5mm} \mathbb{E} \bm{s}_{(i+1)T-t}\hspace{-1mm} \otimes_b \hspace{-1mm}\bm{s}_{(i+1)T-t} \big) \hspace{-0.8mm}-\hspace{-0.8mm} 2 \mathbb{E} \bm{\mathcal{A}}_{(i+1)T}^\tran (\hspace{-0.2mm}I\hspace{-0.8mm}- \hspace{-0.8mm}\M \mathcal{H}\hspace{-0.2mm})^T
    \notag \\ &\quad 
     \otimes_b 
    \bm{\mathcal{A}}_{(i+1)T}^\tran \sum_{t=0}^{T-1}(I-\M \mathcal{H})^t\M 
    \mathbb{E}\wse'_{iT} \otimes_b b
    , 
\end{align}
and take the limit:
\begin{align}
   z_{\infty} &\eqdef \lim_{i\to \infty} z_{i+1} 
   =  \left(I-\mathcal{G} \right)^{-1} \lim_{i\to \infty}  y_{i+1} .
\end{align}
We can show that:
\begin{align}
    y_{\infty} &\eqdef \lim_{i\to \infty} y_{i+1} 
    \notag \\
    &= \mathbb{E} \bm{\mathcal{A}}_{(i+1)T}^\tran \sum_{t=0}^{T-1}(I-\M \mathcal{H})^t\M \otimes_b \bm{\mathcal{A}}_{(i+1)T}^\tran \
    \notag \\ &\quad \times
    \sum_{t=0}^{T-1}(I-\M \mathcal{H})^t \M \big(b\otimes_b b
    \notag \\ &\quad 
    + \lim_{i\to\infty}\mathbb{E} \bm{s}_{(i+1)T-t} \otimes_b \bm{s}_{(i+1)T-t} \big) 
    \notag \\ &\quad 
    -\hspace{-1mm} 2 \mathbb{E} \bm{\mathcal{A}}_{(i+1)T}^\tran (I\hspace{-1mm}-\hspace{-1mm}\M \mathcal{H})^T \otimes_b 
    \bm{\mathcal{A}}_{(i+1)T}^\tran \sum_{t=0}^{T-1}(I\hspace{-1mm}-\hspace{-1mm}\M \mathcal{H})^t
    \notag \\ &\quad \times
     \M \lim_{i\to\infty}\mathbb{E}\wse'_{iT} \otimes_b b,  
\end{align}
but:
\begin{align}
    &\lim_{i\to \infty} \mathbb{E} \bm{s}_{(i+1)T-t} \otimes_b \bm{s}_{(i+1)T-t} 
    \notag \\
    &=  \lim_{i\to \infty} \mathbb{E}  \text{bvec} \left(\diag{R_{k,(i+1)T-t} (\ws_{(i+1)T-t-1})}\right),
\end{align}
and using Jensen's inequality and Assumption \ref{assum:noiseProc}: 
\begin{align}
    &\Vert \diag{R_{k,(i+1)T-t}(w^o) - \mathbb{E}R_{k,(i+1)T-t}(\w_{k,(i+1)T-t-1}) } \Vert 
    \notag \\
    &\leq \mathbb{E}\Vert \diag{R_{k,(i+1)T-t}(w^o) \hspace{-1mm}-\hspace{-1mm} R_{k,(i+1)T-t}(\w_{k,(i+1)T-t-1}) } \Vert 
    \notag \\
    &\leq \kappa_s \mathbb{E}\Vert \wse_{(i+1)T-t-1}\Vert^{\alpha_s}
    \notag \\
    & = \kappa_s \mathbb{E} (\Vert \wse_{(i+1)T-t-1}\Vert^4)^{\gamma_s/4}
    \notag \\
    &\leq \kappa_s (\mathbb{E} \Vert \wse_{(i+1)T-t-1}\Vert^4)^{\gamma_s/4}.
\end{align}
Then, from Theorem \ref{thrm:4thStab} we conclude:
\begin{align}
    &\limsup_{i\to \infty} \Vert \diag{R_{k,(i+1)T-t}(w^o) 
    \notag \\ & \quad 
    - \mathbb{E}R_{k,(i+1)T-t}(\w_{k,(i+1)T-t-1}) } \Vert 
    \leq O(\mu^{\alpha_s/2}).
\end{align}
Therefore:
\begin{align}
    \lim_{i\to \infty} \mathbb{E} \bm{s}_{(i+1)T-t} \otimes_b \bm{s}_{(i+1)T-t} \hspace{-1mm}&=\hspace{-1mm} \text{bvec}(\diag{R_k}) \hspace{-1mm}+\hspace{-1mm} O(\mu^{\alpha_s/2}).
\end{align}
Finally, putting all the results together we get:
\begin{align}
    y_{\infty}
    &=
    \mathbb{E} \bm{\mathcal{A}}_{(i+1)T}^\tran \sum_{t=0}^{T-1}(I-\M \mathcal{H})^t\M \otimes_b \bm{\mathcal{A}}_{(i+1)T}^\tran \
    \notag \\ &\quad \times
    \sum_{t=0}^{T-1}(I-\M \mathcal{H})^t \M \big(b\otimes_b b  + \text{bvec}(\diag{R_k}) 
    \notag \\ &\quad 
   +O(\mu^{\alpha_s/2}) \big) +2\mu \mathbb{E} \bm{\mathcal{A}}_{(i+1)T}^\tran (I-\M \mathcal{H})^T
    \notag \\ &\quad 
     \otimes_b 
    \bm{\mathcal{A}}_{(i+1)T}^\tran \sum_{t=0}^{T-1}(I-\M \mathcal{H})^t
    \M 
    \notag \\ &\quad \times 
    (I-\bar{\mathcal{B}}_T)^{-1} \sum_{t=0}^{T-1} \bar{\mathcal{B}}_t b \otimes_b b,  
\end{align}
and since $\Vert I-\mathcal{G}\Vert = O(\mu)$:
\begin{align}
    z_{\infty} &= (I-\mathcal{G})^{-1} y_{\infty}
    = z + O(\mu^{1+\alpha_s/2}),
\end{align}
with:
\begin{align}\label{eq:defZ}
    z &\eqdef (I\hspace{-0.8mm}-\hspace{-0.8mm}\mathcal{G})^{-1} \Bigg(\mathbb{E} \bm{\mathcal{A}}_{(i+1)T}^\tran \sum_{t=0}^{T-1}(I\hspace{-0.8mm}-\hspace{-0.8mm}\M \mathcal{H})^t\M \otimes_b \bm{\mathcal{A}}_{(i+1)T}^\tran \
    \notag \\ &\quad \times
    \sum_{t=0}^{T-1}(I-\M \mathcal{H})^t \M \big(b\otimes_b b
    + \text{bvec}(\diag{R_k})  \big) 
    \notag \\ &\quad 
    +2\mu \mathbb{E} \bm{\mathcal{A}}_{(i+1)T}^\tran (I\hspace{-0.8mm}-\hspace{-0.8mm}\M \mathcal{H})^T \hspace{-0.8mm}\otimes_b \hspace{-0.8mm}
    \bm{\mathcal{A}}_{(i+1)T}^\tran \sum_{t=0}^{T-1}(I\hspace{-0.8mm}-\hspace{-0.8mm}\M \mathcal{H})^t  
    \notag \\ &\quad  \times 
    \M (I-\bar{\mathcal{B}}_T)^{-1} \sum_{t=0}^{T-1} \bar{\mathcal{B}}_t b \otimes_b b \Bigg).
\end{align}

Then, since:
\begin{align}
    \mathbb{E} \Vert \wse'_{(i+1)T}\Vert^2_{\Sigma} &= \mathbb{E} \text{Tr}\{\wse'_{(i+1)T}\wse'^\tran_{(i+1)T} \Sigma\} = z_{i+1}^\tran \text{bvec}(\Sigma),
\end{align}
we can show:
\begin{align}
    \lim_{i\to\infty}\mathbb{E} \Vert \wse'_{(i+1)T}\Vert^2 &= z_{\infty}^\tran \text{bvec}(I).
\end{align}
Using Theorem \ref{thrm:approxErr}, we conclude:
\begin{align}
    \lim_{i\to \infty} \mathbb{E}\Vert \wse_{(i+1)T}\Vert^2 &= \lim_{i\to \infty} \mathbb{E}\Vert \wse'_{(i+1)T}\Vert^2 
    \notag \\ &\quad 
    + \mathbb{E} \Vert \wse'_{(i+1)T} - \wse_{(i+1)T}\Vert^2 
    \notag \\
    &\quad + 2\mathbb{E}\wse'^\tran _{(i+1)T} (\wse_{(i+1)T }-\wse'_{(i+1)T})
    \notag \\
    &= \lim_{i\to \infty} \mathbb{E}\Vert \wse'_{(i+1)T}\Vert^2  + O(\mu^{3/2}),
\end{align}
and for $\alpha = 1+\frac{1}{2}\min\{1,\alpha_s\}$:
\begin{align}
    \text{MSD} &= \lim_{i\to \infty}\frac{1}{K}\sum_{k=1}^K \mathbb{E} \Vert \we_{k,(i+1)T}\Vert^2 
    \notag \\
    &= \frac{1}{K}z^\tran \text{bvec}(I) + O(\mu^{\alpha}).
\end{align}

\bibliographystyle{IEEEtran}
\bibliography{references}

\end{document}